\newcommand{\xmark}{\ding{55}}%
\definecolor{blued}{RGB}{70,197,221}
\definecolor{pearOne}{HTML}{2C3E50}
\definecolor{pearTwo}{HTML}{A9CF54}
\definecolor{pearTwoT}{HTML}{C2895B}
\definecolor{pearThree}{HTML}{E74C3C}
\colorlet{titleTh}{pearOne}
\colorlet{bull}{pearTwo}
\definecolor{pearcomp}{HTML}{B97E29}
\definecolor{pearFour}{HTML}{588F27}
\definecolor{pearFith}{HTML}{ECF0F1}
\definecolor{pearDark}{HTML}{2980B9}
\definecolor{pearDarker}{HTML}{1D2DEC}
\let\originalleft\left
\let\originalright\right
\renewcommand{\left}{\mathopen{}\mathclose\bgroup\originalleft}
\renewcommand{\right}{\aftergroup\egroup\originalright}
\definecolor{graphicbackground}{rgb}{0.96,0.96,0.8}
\definecolor{rouge1}{RGB}{226,0,38}  
\definecolor{orange1}{RGB}{243,154,38}  
\definecolor{jaune}{RGB}{254,205,27}  
\definecolor{blanc}{RGB}{255,255,255} 
\definecolor{rouge2}{RGB}{230,68,57}  
\definecolor{orange2}{RGB}{236,117,40}  
\definecolor{taupe}{RGB}{134,113,127} 
\definecolor{gris}{RGB}{91,94,111} 
\definecolor{bleu1}{RGB}{38,109,131} 
\definecolor{bleu2}{RGB}{28,50,114} 
\definecolor{vert1}{RGB}{133,146,66} 
\definecolor{vert3}{RGB}{20,200,66} 
\definecolor{vert2}{RGB}{157,193,7} 
\definecolor{darkyellow}{RGB}{233,165,0}  
\definecolor{lightgray}{rgb}{0.9,0.9,0.9}
\definecolor{darkgray}{rgb}{0.6,0.6,0.6}
\definecolor{babyblue}{rgb}{0.54, 0.81, 0.94}
\definecolor{citrine}{rgb}{0.89, 0.82, 0.04}
\definecolor{misogreen}{rgb}{0.25,0.6,0.0}
\DeclareMathOperator*{\argmax}{arg\,max}
\renewcommand{\d}[1]{\ensuremath{\operatorname{d}\!{#1}}}
\newcommand{\I}{{\mathds{1}}}
\newtheorem{definition}{Definition}
\newcommand{\R}{\mathbb{R}}
\newcommand{\E}{\mathbb{E}}
\newcommand{\CommaBin}{\mathbin{\raisebox{0.5ex}{,}}}
\newcommand{\cO}{\mathcal{O}}
\newcommand{\tcO}{\widetilde{\cO}}
\newcommand{\cX}{\mathcal{X}}
\newcommand{\X}{\cX}
\renewcommand{\epsilon}{\varepsilon}
\renewcommand{\hat}{\widehat}
\renewcommand{\tilde}{\widetilde}
\renewcommand{\bar}{\overline}
\newcommand{\nothere}[1]{}
\newcommand{\Vroom}{\normalfont{\texttt{VROOM}}\xspace}
\newcommand{\SequOOL}{\texttt{SequOOL}\xspace}
\newcommand{\StroquOOL}{\texttt{StroquOOL}\xspace}
\newcommand{\StoSOO}{\texttt{StoSOO}\xspace}
\newcommand{\POO}{\texttt{POO}\xspace}
\newcommand{\DOO}{\texttt{DOO}\xspace}
\newcommand{\SOO}{\texttt{SOO}\xspace}
\newcommand{\GPO}{\texttt{GPO}\xspace}
\newcommand{\Zooming}{\texttt{Zooming}\xspace}
\newcommand{\HOO}{\texttt{HOO}\xspace}
\let\inf\undefined
\DeclareMathOperator*{\inf}{\vphantom{\sup}inf}
\DeclareBoldMathCommand{\I}{I}
\DeclareBoldMathCommand{\e}{e}
\DeclareBoldMathCommand{\f}{f}
\DeclareBoldMathCommand{\g}{g}
\DeclareBoldMathCommand{\a}{a}
\DeclareBoldMathCommand{\b}{b}
\DeclareBoldMathCommand{\d}{d}
\DeclareBoldMathCommand{\m}{m}
\DeclareBoldMathCommand{\p}{p}
\DeclareBoldMathCommand{\q}{q}
\DeclareBoldMathCommand{\v}{v}
\DeclareBoldMathCommand{\V}{V}
\DeclareBoldMathCommand{\x}{x}
\DeclareBoldMathCommand{\t}{t}
\DeclareBoldMathCommand{\X}{X}
\DeclareBoldMathCommand{\Y}{Y}
\DeclareBoldMathCommand{\z}{z}
\DeclareBoldMathCommand{\Z}{Z}
\DeclareBoldMathCommand{\M}{M}
\DeclareBoldMathCommand{\n}{n}
\DeclareBoldMathCommand{\ssigma}{\sigma}
\DeclareBoldMathCommand{\SSigma}{\Sigma}
\DeclareBoldMathCommand{\OOmega}{\Omega}
\DeclareBoldMathCommand{\y}{y}
\DeclareBoldMathCommand{\U}{U}
\DeclareBoldMathCommand{\w}{w}
\DeclareBoldMathCommand{\W}{W}
\DeclareBoldMathCommand{\L}{L}
\DeclareBoldMathCommand{\s}{s}
\DeclareBoldMathCommand{\S}{S}
\DeclareBoldMathCommand{\A}{A}
\DeclareBoldMathCommand{\B}{B}
\DeclareBoldMathCommand{\C}{C}
\DeclareBoldMathCommand{\D}{D}
\DeclareBoldMathCommand{\E}{\mathbb{E}}
\DeclareBoldMathCommand{\G}{G}
\DeclareBoldMathCommand{\H}{H}
\DeclareBoldMathCommand{\P}{\mathbb{P}}
\DeclareBoldMathCommand{\Q}{Q}
\DeclareBoldMathCommand{\R}{R}
\DeclareBoldMathCommand{\X}{X}
\DeclareBoldMathCommand{\mmu}{\mu}
\DeclareBoldMathCommand{\ones}{1}
\DeclareBoldMathCommand{\zeros}{0}
\newcommand{\tree}{\mathcal{T}}
\newcommand{\depthOp}{\bot}
\newcommand{\partition}{\mathcal{P}}
\newcommand{\pulledArm}{x}
\newcommand{\timeHorizon}{n}
\newcommand{\learnerDist}{\p}
\newcommand{\pullsNumber}{T}
\newcommand{\uniRob}{\textsc{Robuni}}
\newcommand{\Pone}{\textsc{p1}}
\newcommand{\Exp}{\E}
\newcommand{\Pro}{\P}
\newcommand{\eventbob}{\xi}
\newcommand{\TODO}[1]{
\ifmmode
\text{\textcolor{red}{TODO: #1}}
\else
\textcolor{red}{TODO: #1}
\fi
}
\renewcommand{\epsilon}{\varepsilon}
\renewcommand{\hat}{\widehat}
\renewcommand{\tilde}{\widetilde}
\renewcommand{\bar}{\overline}
\newcommand{\Real}{\mathbb{R}}
\newcommand{\Integer}{\mathbb{N}}
\newcommand{\dom}{\mathcal X}
\newcommand{\lambertW}{W}
\newtheorem{assumption}{Assumption}
\newcommand\numberthis{\addtocounter{equation}{1}\tag{\theequation}}
\begin{document}
    
%

%







\twocolumn[

\aistatstitle{Derivative-Free \& Order-Robust Optimisation}




 \aistatsauthor{ Victor Gabillon,$^{1}$ \And Rasul Tutunov,$^{1}$\And Michal Valko,$^{2}$ \And  Haitham Bou Ammar$^{1}$}

\aistatsaddress{ Huawei R\&D UK$^{1}$ \And Inria Lille-Nord Europe$^{2}$} ]

\begin{abstract}
	In this paper, we formalise order-robust optimisation as an instance of online learning minimising simple regret, and propose \Vroom, a zero'th order optimisation algorithm capable of achieving vanishing regret in non-stationary environments, while recovering favorable rates under stochastic reward-generating processes. Our results are the first to target simple regret definitions in adversarial scenarios unveiling a challenge that has been rarely considered in prior work.
\end{abstract}

\section{Introduction}\label{s:intro}
Derivative-free optimisation is a discipline by which learners attempt to determine optimal solutions while only exploiting function value information~\citep{matyas1965random}. Such a setting is of great interest for applications in which it is either difficult to define, access or even compute first and/or second-order function information~\citep{nesterov2017random}. As such, derivative-free optimisation naturally addresses optimising over functions that are non-differentiable, non-continuous or even non-smooth.  
    
A variety of versatile zero-order methods have been developed under minimal smoothness assumptions~\citep{auer2007improved,kleinberg2008multi}. Though flexible, most algorithms in the literature are designed under specific assumptions on the process by which evaluation data is generated. \SOO~\citep{munos2011optimistic}, for instance, optimises sequentially over a deterministic function, while \StoSOO\citep{valko2013stochastic} optimises a sequence of noisy but stationary functions. No such algorithm, however, handles a sequence of non-stationary observations -- a setting commonly faced in a variety of real-world problems. Consequently, in a scenario in which the process generating the data is unknown a priori, what algorithm would a practitioner employ?

To illustrate the above concept, consider a lifelong learning  problem~\citep{thrun1995lifelong, ammar2014online,parisi2019continual} where a model is updated while interacting with a sequence of tasks. Here, the objective is to have a learner capable of performing well on average over all observed data. If the tasks are similar, learning online helps in solving novel tasks. However, when task differences are drastic, catastrophic forgetting occurs~\citep{french1999catastrophic,kirkpatrick2017overcoming} leading to situations where newly observed data hurts performance on earlier problems. In fact, it has been reported that the order by which tasks are streamed dramatically affects average performance. It is for this reason that recent research in lifelong learning has focused on building \emph{order-robust} approaches~\citep{yoon2019oracle} that we formalise in this work as an instance of online learning with simple regret considerations.

Precisely, we formalise the above problem by optimising over elements $x$ in a continuous set $\dom$. $n$ tasks are streamed sequentially allowing the learner to attempt a sequence $x_1,\ldots,x_n$ across rounds. At round $t$, the learner observes a reward $f_t(x_t)$ corresponding to the performance of parameter $x_t$ on task $t$ as represented by the mapping $f_t$. After $n$ rounds, the agent recommends a parameter $x(n)$ with the objective of maximizing its average reward over all observed tasks, i.e., $\frac{1}{n}\sum_{t=1}^n f_t(x(n))$. 


%
%
 
Contrary to other methods in the literature, we believe that minimal assumptions on the order by which $f_1, \ldots, f_n$ are observed have to be invoked to ensure order-robustness. Furthermore, our algorithm should also behave near optimally as if an a priori knowledge of such an order (e.g., stochastic observations) was explicitly provided. Interestingly, this motivation unveils a novel problem which we refer to as \textbf{the best of both worlds (BOB) challenge}. Here, we aim to design one simple algorithm that is unaware of the nature of the reward generating process but can acquire near-optimal regret guarantees in both stochastic and adversarial non-stationary settings. In this paper, we take the first step to resolving the aforementioned challenge by proposing \Vroom a novel algorithm that optimises over~$f$ at different levels of discretisation of the input space $\dom$. \Vroom makes use of the standard importance-weighted estimates used in non-stochastic literature for efficient exploration. We realise, however, that the direct application of these techniques to our setting suffers from two major drawbacks related to variance explosion when observation probabilities diminish with discretisation widths, and estimate unreliability due to variance disparities. Providing solutions to each of these above problem, our contributions are summarised as: 1) formally introducing simple regret minimisation in non-stochastic and order-robust optimisation, 2) analysing a uniform exploration algorithm and demonstrating state-of-the-art bounds in non-stochastic settings, and 3) introducing \Vroom as a solution to order-robustness proving vanishing regrets in non-stochastic settings and $\tcO\left(n^{-\frac{1}{d+3}}\right)$ in the stochastic case.

\section{Problem Formulation and Analysis Tools}\label{sec:ProFo}
    
    In this section, we detail our problem formulation, its novelty, the associated challenge, and our contributions.
    
    In budgeted  optimisation, a learner optimises a function $f: \dom  \rightarrow\Real$ having access to a number of evaluations limited by $\timeHorizon$. This setting also includes the case   $\dom\subset \Real^D$.
We consider a general case where $f$ is decomposable as,
\[
f=\frac{1}{n}\sum_{t=1}^n f_t.
\]
It is clear that $f$ depends on $n$. However, since $n$ is a fixed input parameter of the problem, we drop such dependency in our notation for ease of exposition.
At each round $t \in \{1, \dots, \timeHorizon\}$, the learner chooses an element $x
    _t\in\dom$  and observes a real number $y_t$, where $y_t=f_t(x_t)$ quantifying its reward. As we are concerned with order-robustness, we distinguish two feedback settings with respect to the process by which $f_t$'s are interconnected:
    \begin{description}
    \item[Stochastic feedback] In stochastic feedback, function evaluations are
    perturbed by a noise in the range $b\in\Real_+$\footnote{Alternatively, we can turn the boundedness assumption into a sub-Gaussianity assumption equipped with a variance parameter equivalent to our range $b$.}:
        Precisely, at any round, we have $f_t= \bar f +\epsilon_t$ with $\epsilon_t$ being a random variable that is identically and independently distributed (i.i.d.) over rounds. Further, we consider the case when $\bar f$ is a function that is independent of $t$ and $n$, and where: 
    \begin{equation} \label{eq:store}
    \Exp \left[ \epsilon_t \right] = 0\quad \text{ and } \quad |\epsilon_t|\leq b.
    \end{equation}
    \item[Non-stochastic feedback] To consider non-stationary and non-stochastic data, we  minimally assume:
    \begin{equation} \label{black}
     \quad |f_{t'}(x)-f_t(x)|\leq b \text{ for all }  t,t' \text{ and } x\in\dom.
    \end{equation}
    \end{description}
Given these feedback laws, the learner's objective is to return an element $x(n)$ in $\dom$ with the largest possible value $f\left(x\right)$ after the $\timeHorizon$ evaluations. To that end, we allow the learner to utilise internal randomisation, i.e., sample $x(n)$ from a distribution $\nu_n$ of its choice, $x(n)\sim \nu_n$. 

Since we consider two feedback laws (i.e., stochastic and non-stochastic), we quantify the agent's performance using two notions of simple regrets. In the first, we consider regret as a random variable induced by $\epsilon_{1}, \dots, \epsilon_{n}$ and bound its expectation over the random sequence $f_{1}, \dots, f_{n}$: 
    \[
    \begin{aligned}\label{eq:regrSto}
\Exp_f \left[r_\timeHorizon\right]
 &\triangleq 
\Exp_{f_1,\ldots,f_n}\left[
 \sup_{x\in\mathcal X} f\left(x\right)- \Exp_{x(n)}\left[f\left(x\right)\right] 
 \right]\\
  &=\sup_{x\in\mathcal{X}}\bar{f}(x) - \mathbb{E}_{f_1,\ldots,f_n}\left[\mathbb{E}_{x(n)}\left[\bar{f}(x(n)) + \sum_{t=1}^n\epsilon_t\right]\right]  \\
        &=\sup_{x\in\mathcal{X}}\bar{f}(x) - \mathbb{E}_{x(n)}\left[\bar{f}(x(n))\right]- \mathbb{E}_{f_1,\ldots,f_n}\left[ \sum_{t=1}^n\epsilon_t\right] \\
&=
 \sup_{x\in\mathcal X} \bar f\left(x\right)- \Exp_{x(n)}\left[ \bar f\left(x\right)\right],
    \end{aligned}
    \]
where the expectation with respect to $\nu_{n}$. When it comes to the non-stochastic setting, the situation is simpler where for a given sequence of function observations, we define:
        \begin{equation}\label{eq:regrAdv}
r_\timeHorizon
 \triangleq 
 \sup_{x\in\mathcal X} f\left(x\right)- \Exp_{x(n)}\left[f\left(x\right)\right] \,,
    \end{equation}
    We further consider the case when evaluation is costly. Therefore, we minimise~ $r_\timeHorizon$ as a function of $\timeHorizon$ assuming that for any given sequence $f_1,\ldots,f_n$, there exists at least one point $x^\star\in \mathcal X$  such that
    $f(x^\star) = \sup_{x\in\mathcal X} f(x)$.
    %


    
Before commencing with our solution, it is worth noting that optimising simple regret with non-stochastic data generating processes has not been studied as a stand-alone problem in literature so far\footnote{Section~\ref{sec:relatedwork} extensively reviews the long history of existing results for stochastic and deterministic feedback laws.}. It is viewed by some authors as an ill-defined problem \cite[Chapter 3]{hazan2016introduction} as the objective varies at each round $t$. Moreover, if the simple regret is formulated as in Equation~\ref{eq:regrAdv}, one can, in some cases, derive bounds for such a quantity from the analysis of cumulative regret, $\sup_{x\in\dom}\frac{1}{n}\sum_{t=1}^n f_t(x)-\frac{1}{n}\sum_{t=1}^n f_t(x_t)$ -- a notion extensively studied in~\citep{auer2002using,zinkevich2003online,bubeck2017kernel}. 
In the stochastic setting or when $f_t=f_1$ for $t\in [n]$, 
obtaining an upper bound, $R_n$, on the cumulative regret leads to an upper bound $r_n=R_n/n$ on the simple regret as noted in~\cite{hazan2016introduction,bubeck2011x}.
It is worth noting that though a bound can be attained, these two objectives are not equivalent. Precisely, a bound obtained in the cumulative regret case is often sub-optimal from a simple regret point of view~\citep{Bubeck09PE}. Furthermore, contrary to simple-regret algorithms, cumulative-regret learners find it challenging to adapt function smoothness without extra information on $f$~\citep{locatelli2018adaptivity}. In fact, it is intuitive to realise that minimising cumulative regret aims at accumulating rewards (see the term $\frac{1}{n}\sum_{t=1}^n f_t(x_t)$), as opposed to identifying the optimum (the term $\frac{1}{n}\sum_{t=1}^n f_t(x(n))$); a property dictated through simple regret considerations. Finally, note that to the best of our knowledge no upper bound on the cumulative regret exists in non-stochastic settings under minimal assumptions on $f$ used in this paper and that the connection between cumulative and simple regret is unclear in the non-stochastic setting.

\subsection{Mathematical Tools}    
During the remainder of this paper, we will make use of mathematical tools that we briefly survey in this section. Firstly, we describe partitioning assumptions facilitating our search for an optimal solution of our optimisation problem, and then detail tree-based learners that we build on in developing \Vroom.   

\subsubsection{Partitioning \& Near-Optimality Dimension}
    During our exploration for an optimum, we discretise the search space into cells (nodes) allowing us to consider tree-like learners. To do so, we follow a hierarchical partitioning $\mathcal{P} = \{\{\mathcal{P}_{h,i} \}^{I_h}_{i=1}\}^{\infty}_{h=0}$ previously introduced in~\citep{munos2011optimistic,valko2013stochastic,grill2015black-box}. For any
    depth $h\geq 0$ (which we think of as a tree representation), the set
    $\{\partition_{h,i}\}_{1\leq i \leq I_h}$
    of
    \textit{cells} (or nodes)
    forms a partition of
    $\dom$, where
    $I_h$
    is the number of cells at depth
    $h$. At depth $0$, the root of the tree, there is a single cell
    $\partition_{0,1}=\dom$. A cell $\partition_{h,i}$ of depth $h$ is split into children sub-cells $\{\partition_{h+1,j}\}_j$
    of depth $h+1$. The objective of many algorithms is to explore the value of~$f$ in the cells of the partition and to identify at the deepest possible depth a cell containing a global maximum. 
For simplicity and without loss of generality we assume all cells have $K$ children sub-cells.



    Given a global maximum $x^\star$ of $f$,
    $i^\star_{h}$
    denotes the index of the unique cell of depth
    $h$
    containing
    $x^\star$
    , i.e., such that
    $x^\star\in \partition_{h,i^\star_{h}}$.
    We follow the work of~\cite{Grill15BB} and state a \emph{single} assumption on
    both the partitioning
    $\partition$
    and the function~$f$.
    \begin{assumption}\label{as:smooth}
        For any global optimum $x^\star$, there exists $\nu>0$ and $\rho\in (0,1)$, where the values of $\nu$ and $\rho$ depend on  $x^\star$, such that $\forall h \in \Integer $,
        $\forall x \in \mathcal P_{h,i^\star_h},  f(x)    \geq f(x^\star) - \nu\rho^h.$
    \end{assumption}
    The notion of a near-optimality dimension~$d$ aims at capturing the smoothness of the function and characterises the complexity of the optimisation task. 
    We adopt the definition of near-optimality dimension given recently by~\citet{grill2015black-box} that unlike \cite{bubeck2011x}, \cite{valko2013stochastic}, \cite{munos2011optimistic}, and \cite{azar2014online},
    avoids topological notions and does not artificially attempt to separate the difficulty of the optimisation from the partitioning. For each depth $h$, it simply counts the number of near-optimal cells $\mathcal{N}_h$, i.e., those whose value is close to~$f(x^\star)$, and determines how this number evolves with the depth $h$.
    The smaller the depth~$d$, the more accurate is the optimisation. 
    \begin{definition}\label{def:neardim}
        For any $\nu > 0$, $C>1$, and $\rho \in (0,1)$, the \textbf{near-optimality dimension}\footnote{\cite{Grill15BB} define  $d(\nu,C,\rho)$ with the constant 2 instead of 3.  3 eases the exposition of our results.} $d(\nu,C,\rho)$ of~$f$ with respect to the partitioning
        $\mathcal P$,
        is 
        %
        \[
        d(\nu,C,\rho) \triangleq    \inf \left\{d'\in\Real^+:  \forall h \geq 0, ~\mathcal N_h(3\nu\rho^h) \leq     C\rho^{-d'h} \right\}\CommaBin
        \]
        where
        $\mathcal N_h(\epsilon)$ is the number of cells
        $\mathcal P_{h,i}$    of depth $h$ such that
        $\sup_{x\in \mathcal P_{h,i}} f(x) \geq    f(x^\star) - \epsilon$.
    \end{definition}
    %
    %
    %
    By construction we have $d\leq \log(K)/\log(1/\rho)$. In general $d\ll \log(K) \log(1/\rho)$  as having $d=0$ is the most common case in practice~\citep{valko2013stochastic}.
    
    \subsubsection{Tree-Based Learners}
    Tree-based exploration or a tree search algorithm is an approach that has been widely applied to optimisation as well as bandits or planning problems \citep{kocsis2006bandit,coquelin2007bandit,hren2008optimistic}; see \cite{munos2014from} for a survey.
    %



        First we define the sampling of an element $x$ in a cell $\partition_{h,i}$ \emph{with respect to $\partition$}, denoted  $x\sim U_\partition(\partition_{h,i})$ as follows: Starting from a cell $c_1=\partition_{h,i}$, we descend the partition until depth $n$ by choosing at depth $h'$ (with $h \le h'< n$) a sub-cell $c_{h'+1}$ of $c_{h'}$ chosen uniformly at random among the $K$ children cells of $c_{h'}$. Once at depth $n$ in $\partition_{n,i}$, we pick an element $x$ uniformly at random in $\partition_{n,i}$. \footnote{Assuming that each parent cell has $K$ children,  sampling from $U_\partition(\partition_{h,i})$  is just a uniform sampling from the descendants of $\mathcal{P}_{h,i}$ at depth $n$. If we assume that each cell can have different number of children, then $U_\partition(\partition_{h,i})$ follows the topology of $\partition$.}

        At each round $t$, the learner selects an element $x_t\in \dom$. 
        First the learner selects a cell $\partition_{h_t,i_t}$ according to the distribution 
        $\learnerDist_t$ on $\partition$ that associates to each cell $\partition_{h,i}$ the probability
        $\learnerDist_{h,i,t}=\Pro\left(\partition_{h_t,i_t}=\partition_{h,i}\right)$ of being the selected cell $\partition_{h_t,i_t}$ at time $t$.
        We have $\sum_{\partition_{h,i}\in\partition} \learnerDist_{h,i,t}=1$ for any given $t$.
        Then, the learner samples  an element $x_t$ in $\partition_{h_t,i_t}$ with respect to $\partition$,  $x_t\sim U_\partition(\partition_{h_t,i_t})$, and asks for its evaluation.

    We denote the value  $f_{h,i} \triangleq \Exp_{x\sim U_\partition(\partition_{h,i})}\left[      f(x)\right]$, $f_{h,i,t} \triangleq \Exp_{x\sim U_\partition(\partition_{h,i})}\left[      f_t(x)\right]$ and, in the stochastic feedback case, $\bar f_{h,i}=  \Exp_{x\sim U_\partition(\partition_{h,i})}\left[    \bar f(x)\right]$.
    We use
    $\pullsNumber_{h,i}(t)= \sum_{s=1}^{t-1} 1_{x_s\in\partition_{h,i}}$ to denote the total number of evaluations that have been allocated by the learner between round $1$ and the beginning of round $t$ to the cell $\partition_{h,i}$. 
    %
        For the stochastic noisy case, we also define the estimated value of the cell $\mathcal{P}_{h,i}\in\mathcal{T}$ as follows:
     given the $\pullsNumber_{h,i}(t)$ evaluations $y_1,\ldots,y_{\pullsNumber_{h,i}(t)},$ we have 
     \[
     \hat f_{h,i}(t) \triangleq \frac{1}{\pullsNumber_{h,i}(t)} \sum_{s=1}^{\pullsNumber_{h,i}(t)} y_s,
     \]
    the  empirical  average  of  rewards  obtained  at this cell.

        Similarly, for the non-stochastic case, we define     $\tilde f_{h,i}(t)$ that estimates $ f_{h,i,t}$ for cell $\partition_{h,i}$ at time $t$. This estimates uses the function values $f_t(x_t)$ if collected from sampling directly cell $\partition_{h,i}$ as $x_t\sim U_\partition(\partition_{h,i})$ which corresponds to $h_t=h$ and $i_t=i$. In addition, the estimate     $\tilde f_{h,i}(t)$ also takes into account $f_t(x_t)$ if both $x_t\in\partition_{h,i}$ and $h_t\leq h$. This addition improves the accuracy of our estimate while forcing  $h_t\leq h$
 insures that $f_t(x_t)$ is an unbiased estimate of the quantity of interest $ f_{h,i,t}$ as proven below.
    %
    Having a sample  $x_t\sim U_\partition(\partition_{h_t,i_t})$ with  $\partition_{h_t,i_t}\sim \learnerDist_t$, possibly $\partition_{h_t,i_t}\neq\partition_{h,i}$,  
 and an observation $y_t=f_t(x_t)$, we have 
    \begin{equation}\label{es:unbiased}
    \tilde f_{h,i}(t) 
    \triangleq
    \frac{ y_t 1_{x_t\in \partition_{h,i}} 1_{h\geq h_t} 
    }{
     \Pro(\pulledArm_t\in\partition_{h,i} \cap h\geq h_t)
    }.
    \end{equation}
        \[
        \begin{aligned}
        \text{ Note that~~~~~~~~~~~~}\mathbb{E}_{\mathcal{P}_{h_t,i_t}\sim\boldsymbol{p}_t}[\Exp_{x_t\sim U_{\mathcal{P}}(\mathcal{P}_{h_t,i_t})}[\tilde f_{h,i}(t)]]\\
        =
                \Exp_{x_t\sim U_{\mathcal{P}}(\mathcal{P}_{h_t,i_t})}[
             y_t  | x_t\in \partition_{h,i} \text{ and } h\geq h_t 
                ]\\
                        \stackrel{\textbf{(a)}}{=}
                \Exp_{x_t\sim U_{\mathcal{P}}(\mathcal{P}_{h,i})}[
             y_t  
                ]
         =  f_{h,i,t}
        .\end{aligned}
        \]
        where \textbf{(a)} is by definition of $U_{\mathcal{P}}(\mathcal{P}_{h_t,i_t})$.
We define
            $ \tilde F_{h,i}(t) \triangleq  \sum_{s=1}^{t} \tilde f_{h,i}(t)$,
    the  sum  of  rewards  obtained  at this cell.
We define     $ F_{h,i}(t) \triangleq  \sum_{s=1}^{t}    f_{h,i,s}$.
 Finally,
    let $[a:c]=\{a,a+1,\ldots,c\}$ with
    $a,c\in\Integer$, $a\leq c$, and $[a]=[1:a]$. 
    $\log_{d}$ denotes the logarithm in base $d\in\Real$. Without a subscript, $\log$ is the natural logarithm in base $e$.

      However, this method does not fit well the cases where we need to sample
      a large number of cells with a limited amount of pulls such as low noise settings, deterministic feedback and $d=0$ for which \StoSOO has no theoretical guarantees.
     In \StroquOOL a separate cross-validation phase is allocating  $\tcO(n)$ extra samples to the best cells that are recommended at the end of the initial exploration phase. However, when dealing with non-stochastic data there are no guarantees that the data collected in the two phases are related therefore introducing a bias that happens to be hard to control and which introduces the undesired parameter of the length of the exploration phase.

\section{VROOM: Simple Algorithm for Order-Robust Optimisation}
\label{allourcontributions}
This section details our contributions to addressing order-robustness. On a high level, we split the exposition in three parts. First, we provide a robust version of uniform exploration that sets state-of-the-art regret guarantees for non-stochastic settings. While these guarantees are believed to be unimprovable, uniform exploration is known to perform sub-optimally in stochastic scenarios. As such, we revert-back to the BOB challenge discussing achievable regret rates before presenting \Vroom.   

Before diving into details of our proposed method, it is instructive to recap the challenges faced when considering two feedback laws. Targeting only stochastic feedbacks, it is well known that \StroquOOL and \GPO, achieve state-of-the-art regret bounds. Unfortunately, the direct application of these methods to an adversarial setting is challenging due to the potential blunder that can be caused by feeding uninformative rewards for a deterministic learner as pointed in~\citet[Section~3]{Bubeck12RA}. Therefore, it is essential for an efficient learner to employ internal randomisation that defines a positive probability $\mathbb{P}\left(x_{t} \in \mathcal{P}_{h,i}\right)$ for each cell during its exploration quest. Given positive probabilities, we can now target an estimator for $f(x)$ to perform meaningful updates. Clearly, the simple usage of empirical averaged rewards $ \hat f_{h,i}(t) $ in cell $\partition_{h,i}$ is easily biased by an adversary. Fetching an unbiased estimate, we realise that $ \tilde f_{h,i}(t)$ is a meaningful alternative. Though viable, $\tilde f_{h,i}(t)$ can possess high variance especially if $\learnerDist_{h,i,t}$ is small (scaling with $1/\Pro(\pulledArm_t\in\partition_{h,i})$). Two sources contribute to these high variance occurrences: 1) long uniform exploration, and 2) $K^h$ increase in the number of cells with depth (leading to variances of $K^h$ magnitude). Realising these problems, we present our first challenge that we tackle in this paper as: \\
\underline{\textbf{Challenge I:}} How to control potentially large estimator variances (especially in the stochastic setting)?

Apart from variance control, we face another interesting problem related to the optimum recommendation, $x(n)$, made by the learner after $n$ rounds of interaction. If we are to recommend the best cell as that with the highest estimate $\sum_{t=1}^n\tilde f_{h,i}(t)$, we might end-up comparing estimates with widely different confidence intervals\footnote{Please note that this is due to the dependence on the number of pulls allocated to $\partition_{h,i}$, as well as on the variance of the estimates.}. At first sight, one can attempt to follow the approaches proposed by others in the literature to tackle this issue. In \StoSOO, for instance, $x(n)$ is chosen among the cells that have been pulled in an order of $\tcO(n)$. Though appealing, this method does not fit-well the cases where we need to sample a large number of cells with a limited number of pulls such as in the low noise, deterministic feedback and/or $d=0$ settings\footnote{In such cases \StoSOO lacks any theoretical guarantees.}. In \StroquOOL, on the other hand, a separate cross-validation phase allocates $\tcO(n)$ extra samples to the best cells recommended at the end of an initial exploration phase. Nonetheless, when dealing with non-stochastic reward-generating processes, there are no guarantees on the relationship between collected data in two successive phases. Hence, following such a recommendation introduces a (hard-to-control) bias typically leading to additional hyper-parameters measuring exploration lengths. Observing optimum recommendation difficulties arising from considering two feedback laws, our second challenge can be stated as:\\
\underline{\textbf{Challenge II:}} How to recommend an optimum $x(n)$ capable of operating successfully in both feedback settings?

The remainder of this section provides solutions to each of the above challenges ultimately proposing \Vroom as a simple yet effective algorithm for order-robust optimisation. 



\subsection{Uniform Allocation Baselines}
\label{unifrom}
In this section, we derive achievable baseline simple regret rates in non-stochastic scenarios. We note that such a problem has not yet been targeted by current literature. To do so, we consider a uniform exploration strategy allowing us to achieve initial results addressing \textbf{Challenge~II}\footnote{Note that as the above exposition considers no stochasticity. As such, answers to \textbf{Challenge I} are considered in later sections when attempting to determine a best of both worlds algorithm.}. We specifically discuss two optimum recommendation techniques: 1) cross-validation, and 2) lower confidence bounds (LCBs). We report how existing cross-validation techniques can be used to obtain regret rates in a stochastic case where the learner is unaware of smoothness parameters and discuss corresponding limitations in non-stochastic settings. We then demonstrate that LCB allows building a robust version of uniform allocation~\uniRob{} for non-stochastic environments\footnote{We report the complete proofs in Appendix~\ref{app:uni}.}.

\paragraph{Stochastic feedback}\label{sec:uniSto}
To determine valid regret rates, we distinguish two scenarios depending on the knowledge of smoothness parameters. First, uniform strategy exploits $(\nu,\rho)$, while, second, the learner is oblivious to $(\nu,\rho)$.

\emph{$\circ$ With knowledge of $(\nu,\rho)$:}
At depth $h$ a uniform algorithm can explore all $K^h$ cells $\left\lfloor n/K^h \right\rfloor$ times. Such a strategy recommends a valid parameter $x$ that attains the highest observed $\hat f(x)$.
At depth $h$, errors are bounded by $\nu\rho^h$, and the estimation error is given by $\cO\left(\sqrt{K^h/n}\right)$. Optimising over $h$ for the sum of these two errors, we can state that by setting $H=\left\lfloor\log_{K/\rho^2}(n)\right\rfloor$, $r_n = \tcO \left(\log(1/\delta)/n\right)^{\frac{1}{\frac{\log  K}{\log  1/\rho}+2}}$ with probability at least $1 - \delta$.

\emph{$\circ$ Without knowledge of $(\nu,\rho)$:}
So far, we derived a bound where the optimal choice of $H$ is dependent on smoothness parameters $(\nu,\rho)$. When not knowing $(\nu,\rho)$, our strategy uses a budget of $n/2$ rounds to explore all depths $h\in [0:\lfloor \log_{K}(n) \rfloor]$. A depth $h$ is explored uniformly with a budget of 
$n/(2\lfloor \log_{K}(n) \rfloor )$. We define $\lfloor \log_{K}(n) \rfloor$ candidates $x_h$ with the highest observed $\hat f(x_h)$ among the cells of depth $h$. Now, the final recommendation corresponds to a choice between these $\lfloor \log_{K}(n) \rfloor$ candidates. However, each has been pulled   $T_{x_h}(n/2) = \frac{n/(2\lfloor \log_{K}(n) \rfloor )}{K^h} $ number of times and as such, arrives with different confidence estimates.  We can implement a \emph{cross validation} step, as used in~\cite{bartlett2019simple}, which only requires $n$ to make use of the remaining $n/2$ rounds. Each  candidates is sampled additionally $n/(2\lfloor \log_{K}(n) \rfloor )$ 
This leads us to obtain  $r_n = \tcO \left(
\left(\frac{K}{n\rho^2}\right)^{\frac{1}{\frac{\log  K}{\log  1/\rho}+2}}
\right)$.
With this strategy, wecrecover the same results as if smoothness parameters were provided up to a logarithmic factor. An alternative to cross validation with same theoretical guaranties is that, after a uniform allocation on all cells at a depth smaller than $\lfloor \log_{K}(n) \rfloor$, to recommend among all cells these with largest \emph{lower confidence bound} $\hat f_{h,i}(n)-b\sqrt{\frac{\log(n^2/\delta)}{T_{h,i}(n)}}$. 
This allows to compare candidate cells at different depths by taking into account
the uncertainty  $b\sqrt{\frac{\log(n^2/\delta)}{T_{h,i}(n)}}$ around their estimated averages.
Though this approach requires the knowledge of $b$ (the range of $\epsilon_{t}$), it will come handy in the non-stochastic setting detailed next. 



\paragraph{Non-stochastic feedback}

\begin{figure}
    \centering
    \framebox{
        ~    \begin{minipage}{.42\textwidth}
            
            \textbf{Parameters:} $\partition 
            =
            \{\partition_{h,i}\}$,  $b,n,f_{\max}$\\[.05cm]
     
     Set $\delta=\frac{4b}{f_{\max} \sqrt{n}}$.

            \textbf{For} $t=1,\ldots,n$ \textit{\textbf{\textcolor{gray}{\hfill $\blacktriangleleft$  Exploration $\blacktriangleright$}}}

        ~~~~\, Evaluate a point $x_t$ sampled from $ U_\partition(\partition_{0,1})$.   \\[.15cm]  

            \textbf{Output} $x(n)\sim \mathcal{U}(\partition_{h(n),i(n)})$\\ where  $ (h(n),i(n))  \gets  \argmax\limits_{h,i} \tilde F_{h,i}(n) - B^{adv}_{h}(n)$

        \end{minipage}
    }
    \caption{The \uniRob{} algorithm}\label{fig:uniAD2}
\end{figure}

As discussed above, in the non-stochastic setting we use a uniform allocation combined with a recommendation based lower confidence estimate of the value of cell $\partition_{h,i}$ as 
$\tilde F_{h,i}(n) -
B^{adv}_{h}(n)$ where $B^{adv}_{h}(n) \triangleq
\sqrt{ 2n f^2_{max}K^h   \log(\timeHorizon^2/\delta)  } - \frac{f^2_{max}}{3}K^h\log(\timeHorizon^2/\delta)$.
We name such an algorithm \uniRob{} and detail its pseudo-code in Figure~\ref{fig:uniAD2}.
\uniRob{} is required knowledge of $b$ (See Equation~\ref{black}), and
$f_{max}$  that  upper bounds the maximal value of the functions $f_1,\ldots,f_n$, i.e.,  
$|f_t(x)|\leq f_{max}$ for all $x\in\dom$ and $t\in\{1,\ldots,n\}$.


We are now ready to present the simple regret bounds attained by \uniRob{} in the following theorem:
\begin{restatable}[\textcolor{titleTh}{Upper bounds for \uniRob{}}]{theorem}{order}    \label{th:unilcbad}

Consider any sequence of functions $f_1,\ldots,f_n$ such that $|f_t(x)|\leq f_{max}$ for all $x\in\dom$ and $t\in [n]$. Let $f=\frac{1}{n}\sum_{t=1}^n f_t$, and $x^\star$  be one of the global optima of $f$ with associated $(\nu,\rho)$. Then 
        after $\timeHorizon$ rounds,     the simple regret of {\uniRob} is bounded as:
        \[         
        \Exp \left[r_\timeHorizon\right]
        =
        \cO \left(\log(n/\delta)\left(\frac{K}{n\rho^2}\right)^{\frac{1}{\frac{\log  K}{\log 1/\rho}+2}}\right) 
        \]
\end{restatable}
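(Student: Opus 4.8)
The plan is to reduce the expected simple regret to the value gap between the recommended cell and a global optimum, $\Exp[r_n] = f(x^\star) - \Exp[f_{h(n),i(n)}]$, and to control this gap on a high-probability concentration event while paying only a lower-order price off that event. First I would set up the concentration event. Under \uniRob{} every $x_t$ is drawn from the root, so $h_t=0$ and the importance weight is deterministic, $\Pro(x_t\in\partition_{h,i})=K^{-h}$, giving $\tilde f_{h,i}(t)=K^h y_t\,\indicator{x_t\in\partition_{h,i}}$. These terms are independent across $t$ with mean $f_{h,i,t}$, magnitude at most $K^h f_{max}$, and per-step variance at most $\Exp[\tilde f_{h,i}(t)^2]\le K^h f_{max}^2$; summing gives total variance at most $nK^h f_{max}^2$. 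Applying Bernstein's inequality to $\tilde F_{h,i}(n)-F_{h,i}(n)$ yields, for each fixed cell, a two-sided deviation of order $B^{adv}_h(n)=\Theta\big(\sqrt{nK^h f_{max}^2\log(n^2/\delta)}\big)$, matching the leading term of the stated penalty. A union bound over all cells up to a depth $H$ to be chosen (there are $\sum_{h\le H}K^h\le n^2$ of them once $K^H\le n$, which holds at the optimal $H$) makes the event $\mathcal E=\{\forall(h,i),\ h\le H:\ |\tilde F_{h,i}(n)-F_{h,i}(n)|\le B^{adv}_h(n)\}$ hold with probability at least $1-\delta$.

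On $\mathcal E$ I would run the lower-confidence-bound recommendation argument. Let $i^\star_h$ index the cell of depth $h$ containing $x^\star$; Assumption~\ref{as:smooth} gives $f_{h,i^\star_h}\ge f(x^\star)-\nu\rho^h$, hence $F_{H,i^\star_H}(n)\ge n\big(f(x^\star)-\nu\rho^H\big)$. Because $(h(n),i(n))$ maximises $\tilde F_{h,i}(n)-B^{adv}_h(n)$, comparing it against the optimal cell at depth $H$ and invoking the two bounds furnished by $\mathcal E$, namely $\tilde F_{H,i^\star_H}(n)\ge F_{H,i^\star_H}(n)-B^{adv}_H(n)$ and $F_{h(n),i(n)}(n)\ge\tilde F_{h(n),i(n)}(n)-B^{adv}_{h(n)}(n)$, chains into $F_{h(n),i(n)}(n)\ge F_{H,i^\star_H}(n)-2B^{adv}_H(n)$. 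Dividing by $n$ gives a regret bound on $\mathcal E$ of $\nu\rho^H+\tfrac{2}{n}B^{adv}_H(n)=\cO\big(\nu\rho^H+f_{max}\sqrt{K^H\log(n^2/\delta)/n}\big)$.

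Finally I would optimise the two competing terms, $\rho^H$ decreasing and $\sqrt{K^H/n}$ increasing in $H$, by balancing $\rho^{2H}\asymp K^H/n$, i.e.\ $H=\lfloor\log_{K/\rho^2}n\rfloor$. A short computation with $a:=\log K/\log(1/\rho)$ shows $\rho^H=n^{-1/(a+2)}$ and $(K/(n\rho^2))^{1/(a+2)}=\rho^{H-1}$, so both terms collapse to the advertised rate. The off-event contribution is at most $2f_{max}\delta=\cO(b/\sqrt n)$ with the algorithm's choice $\delta=4b/(f_{max}\sqrt n)$, and it is dominated since the exponent $1/(a+2)\le 1/2$. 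Absorbing $\nu,\rho,f_{max}$ into the constant and bounding $\sqrt{\log(n^2/\delta)}$ by $\log(n/\delta)$ gives the claim.

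The step I expect to be the main obstacle is the concentration together with its union bound. The estimator variance grows like $K^h$, so the penalty $B^{adv}_h$ must increase fast enough that deep, barely-sampled cells with spuriously large importance-weighted estimates are never recommended, yet remain small enough at the near-optimal depth $H$ to keep the regret low; making the union bound over the $K^h$-many cells per depth close against a single parameter $\delta$ (and verifying $K^H\le n$ so that $\log(n^2/\delta)$ is a legitimate envelope for $\log(\sum_{h\le H}K^h/\delta)$) is where the genuine care lies. The remaining Bernstein bookkeeping and the optimisation over $H$ are routine.
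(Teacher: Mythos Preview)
Your proposal is essentially the paper's own proof: independence and Bernstein/Bennett bounds on the importance-weighted sums, the same LCB chaining $\Exp[f(x(n))]\ge \tfrac{1}{n}(\tilde F_{h(n),i(n)}-B_{h(n)})\ge \tfrac{1}{n}(\tilde F_{H,i^\star_H}-B_H)\ge \tfrac{1}{n}(F_{H,i^\star_H}-2B_H)\ge f(x^\star)-\nu\rho^H-2B_H/n$, and the same balancing $H=\lfloor\log_{K/\rho^2}n\rfloor$. Your handling of the off-event term via $\delta=4b/(f_{\max}\sqrt n)$ is actually more explicit than the paper's; the only point to watch (present in the paper too) is that the concentration event must also cover the recommended depth $h(n)$, which is resolved by taking the union bound over all depths up to $\lfloor\log_K n\rfloor$ rather than only up to $H$.
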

%
%
%
%
The above result demonstrates that using \uniRob{} uniform exploration strategies can be made order-robust retaining same regret guarantees in the non-stochastic setting as those obtained in the stochastic case. However, we conjecture that this is not true for most learners, where we believe that any algorithm can only obtain, at best, the same regret rates as \uniRob{} in non-stochastic cases. This is not unlike best-arm identification problems( when $\dom$ is reduced to $\dom=[K]$), where the authors in~\citep{abbasi-yadkori2018best} showed unimprovable regret rates to those obtained by uniform strategies.




%

\subsection{Achievable Rates for BOB}
\begin{table}
{\center    
        \begin{tabular}{|c|c|c|c|c|}
        \hline
             &
             $b=0$ &    stochastic ($b>0$) & non-sto \\ \hline
                                    \Vroom &          \textbf{open} &
                                    $\frac{1}{n}^{\max\big(\frac{1}{d+3}, \frac{1}{\frac{\log  K}{\log \frac{ 1}{\rho}}+2} \big)}$& $\frac{1}{n}^{\frac{1}{\frac{\log  K}{\log \frac{ 1}{\rho}}+2}}$\\ \hline
            \StroquOOL    & $\left(\frac{1}{n}\right)^{\frac{1}{d}}$ &        $\left(\frac{1}{n}\right)^{\frac{1}{d+2}}$& \xmark\\  \hline
            \SequOOL   &         $\left(\frac{1}{n}\right)^{\frac{1}{d}}$ & \xmark & \xmark \\ \hline
            Uniform(s) & $\frac{1}{n}^{\frac{\log \frac{ 1}{\rho}}{\log  K}}$     & \multicolumn{2}{c|}{ $1/n^{\frac{1}{\frac{\log  K}{\log  1/\rho}+2}}$ }\\ \hline
        \end{tabular}
        \caption{$\tcO$ rates of SOTA in  deterministic, stochastic and non-stochastic settings. \xmark~  denotes a non-vanishing regret. Though \Vroom stochastic bounds can be applied  when $b=0$, we leave a better bound open direction of future research.}\label{tablle}
        }
    \end{table}

Though the uniform exploration algorithm discussed above achieves order-robustness in non-stochastic settings, it can become highly sub-optimal for stochastic scenarios. In fact, it is well known that for stochastic data generating processes, \StroquOOL and \GPO obtain a state-of-the-art simple regret of the order $\tcO\left(\left(\frac{1}{n}\right)^{\frac{1}{d+2}}\right)$. Yet, as detailed in Section~\ref{sec:ProFo}, one can design a sequence of functions (i.e., non-stochastic scenario) $f_1,\ldots,f_n$ with any associated parameter $d,\nu,\rho$ such that simple regret of \StroquOOL, for instance, is lower bounded by a constant for any $n$. 

Given the lack of  algorithm performing well in both scenarios, we next attempt to design a learner that is unaware of the nature of the reward-generating process but  simultaneously achieves near-optimal simple regret bounds, i.e., 
\begin{align*}
    \mathbb{E}[r_{n}] & =\tcO\left(\left(\frac{1}{n}\right)^{\frac{1}{d+2}}\right) \ \ \text{(stochastic feedback)} \\
    \mathbb{E}[r_{n}] & =\tcO\left(\left(\frac{1}{n}\right)^{\frac{1}{ \frac{\log(K)}{\log(1/\rho)}+2  }}\right) \ \ \text{(non-stochastic feedback)}
\end{align*}
%
 
%
 %
 \label{sec:rateExp}
\paragraph{Rates of Optimality:} To understand the optimality statements that can be considered when tackling both scenarios, we draw upon results from best-arm identification problems, i.e., when $\dom=[K]$. There, \cite{abbasi-yadkori2018best} showed that obtaining optimal rates in stochastic and non-stochastic cases simultaneously is impossible. We conjecture that this result carries to our setting, where we believe simultaneous optimal rates are also not achievable. 

This, consequently, poses the question of what type of optimal rates can an algorithm obtain in stochastic feedback settings, while still guaranteeing vanishing regrets in non-stochastic cases. A formal lower bound guarantee of optimality is beyond the scope of this paper, and is left as an open question for future research. We do, however, demonstrate \Vroom to be the first algorithm acquiring vanishing regrets in non-stochastic scenarios, while still achieving favourable rates compared to state-of-the-art stochastic algorithms, i.e., $\mathbb{E}[r_{n}] =\tcO\left(n^{-\frac{1}{d+3}}\right)$.







\subsection{Robust optimisation}
\label{stochastic}
%
%
%
%
%

In this section, we present a new learner and analyse its theoretical performance against any i.i.d. stochastic problem or any non-stochastic environment.


\begin{figure}
    \vspace{-0.4cm}    \centering
    \framebox{
        ~    \begin{minipage}{.45\textwidth}
            
    \textbf{Parameters:} $\partition 
            =
            \{\partition_{h,i}\}$,  $b,n,f_{\max}$\\
     Set $\delta=\frac{4b}{f_{\max} \sqrt{n}}$.\\[.01cm]

    \textbf{For} $t=1,\ldots,n$ \textit{\textbf{\textcolor{gray}{\hfill $\blacktriangleleft$  Exploration $\blacktriangleright$}}}
    
                ~~~~\,    For each depth $h\in\left[\left\lfloor \log_K (n) \right\rfloor\right]$, rank\footnote{ Equalities between cells
            or comparisons with cells that have not been pulled yet are broken arbitrarily.} the cells  by decreasing order of $f^-_{h,i}(t-1) $: Rank cell $\partition_{h,i}$ as $\hat{\langle i \rangle}_{h,t}$. 
     
            


            ~~~~\,$x_t\sim \mathcal{U}_{\partition}(\partition_{h_t,i_t})$ where  
            $\partition_{h_t,i_t}$ is sampled so that
                 for any $h\in\left[\left\lfloor \log_K (n) \right\rfloor\right]$ and any    $i\in[K^h]$, 
            \[
            \learnerDist_{h,i,t}
            \triangleq
            \Pro\left(\partition_{h_t,i_t}=\partition_{h,i}\right) 
             \triangleq
             \displaystyle  \frac{ 1 }{~h\hat{\langle i \rangle}^{\vphantom{X}}_{h,t} \bar \log_K(n) ~}
             \]
        and where 
        $    \bar \log_K(n)=
            \sum^{\left\lfloor\log_K(n)\right\rfloor}_{h=1}\sum^{K^h}_{i=1}\frac{1}{hi}$.
            \\[.2cm]   
    
            \textbf{Output} $x(n)\sim \mathcal{U}_{\partition}(\partition_{h(n),i(n)})$ where  $ (h(n),i(n)) \gets  \argmax\limits_{(h,i) } \tilde F_{h,i}(n)- B_{h,i}(n)$

            
        \end{minipage}
    }
    \caption{ The  \Vroom algorithm}\label{fig:vroom}
\end{figure}

We title the algorithm \Vroom and detail it in Figure~\ref{fig:vroom}. Intuitively,
\Vroom first selects a depth $h$ with a probability inversely proportional $h$. Given its depth selection, \Vroom queries the best estimated cell with ``probability'' one, the second-best estimated cell with a ``probability'' of one half, and so on until pulling the worst-estimated cell with a ``probability'' $\frac{1}{K^h}$. To guarantee valid probabilities, we need a normalisation factor. As it is sufficient to sample depths $h\in[\left\lfloor \log_{K}(n)\right\rfloor]$, the normalising constant can be bounded as:  
\[
\begin{aligned}
\bar \log_K(n) &= \sum^{\left\lfloor\log_K(n)\right\rfloor}_{h=1}\sum^{K^h}_{i=1}\frac{1}{hi}
\leq \sum^{\left\lfloor\log_K(n)\right\rfloor}_{h=1}\frac{1}{h}(\log(K^h)+1) \\
&\leq 2\sum^{\left\lfloor\log_K(n)\right\rfloor}_{h=1}\log(K)\leq 2\log(n). 
\end{aligned}
\]

At round $t$, the estimate used in \Vroom to rank the cell during exploration is given by $\hat f^-_{h,i}(t-1)\triangleq\hat f_{h,i}(t-1)-B^{iid}_{h}(t-1) $ for cell $\partition_{h,i}$,
    where 
    $B^{iid}_{h}(t-1) =\sqrt{\frac{\log(4n^3/\delta)}{2T_{h,i}(t-1)}}$ with $f^-_{h,i}(t-1)$ set to $-\infty$ if $  T_{h,i}(t) = 0$.
    Following this ranking procedure, we denote the estimated rank of cell $i$ at depth $h$ at time $t$ as $\hat{\left\langle i \right\rangle}_{h,t}$.
After $n$ rounds, \Vroom recommends the element $x(n)$ sampled uniformly from the estimated best cell  $\partition_{h(n),i(n)}$. 
Recommendation of the best cell $\partition_{h(n),i(n)}$ after $n$ rounds is based on $\tilde F_{h,i}(n)- B_{h,i}(n)$ where $B_{h,i}(t)$ defines the confidence bound around our estimate $\tilde F_{h,i}(t)$. For all $t \in [n]$, such a bound is given by: 
\[
\begin{aligned}\label{eq:upperB}
B_{h,i}(t)
=
&f_{\max}\sqrt{2h\bar\log_K(n)\log{2n^2/\delta}\sum^t_{s=1}\hat{\langle i \rangle}_{h,s}}\\
&+f_{\max}\bar\log_K(n)\frac{\log{2n^2/\delta}}{3}.
\end{aligned}
\]


One can view the sampling strategy of \Vroom as a randomised version of that introduced in \StroquOOL\citep{bartlett2019simple}. Essentially, it implements a Zipf exploration~\citep{Powers98AE} meaning that it first ranks the different options (here cells), and then attempts to allocate evaluations inversely proportional to their rank. We note that such a strategy has also been used in previous algorithms, e.g.,  Successive Rejects
(SR) of \cite{audibert2010best} and \Pone~of \cite{abbasi-yadkori2018best}.
To minimise simple regret in the stochastic case, it is crucial to limit the variance of the best-cell estimators. 
Therefore  \Vroom{}, from
its very first pull, chooses with higher probability the cells that are estimated to be among the best. 
This comes with almost no additional cost. Indeed, at depth $h$, pulling 
the estimated best cell with probability 
$1/(h\bar \log_K(n))$ does not prevent from pulling all the cells almost uniformly. More precisely, for any $k\in[K^h]$ all cells ranked below $k$, i.e., $\hat{\left \langle i\right\rangle}_{h,t}\leq k$, are pulled with a probability of at 
least  $1/(hk\bar \log_K(n)).$ 
Therefore, no suboptimal cell is actually left out in 
the early chase for a cell containing $x^\star$. Hence, the variances of the estimators can only increase by a factor of $\bar \log_K(n)$ w.r.t. the uniform strategy.

Additionally, compared to a fixed-phase algorithm,  our
analysis is also more flexible. In fact, we can analyse, for instance, the quality of the estimated ranking $\hat{\langle\cdot\rangle}$ and, consequently, the adaptive sampling procedure of the arms at any round.
Actually, these rounds can be chosen in a problem-dependent fashion, to 
minimise the final regret\footnote{We detail the process by which such rounds are chosen in the sketch of the proof in Section~\ref{sec:rateExp}.}. 

     Remarkably, \Vroom uses a lower confidence bound (LCB) to guide exploration and recommendation. As mentioned earlier, this allows us to compare cells at different and within the same depth by taking into account the uncertainty  $b\sqrt{\frac{\log(n^2/\delta)}{T_{h,i}(n)}}$ around their estimated averages.
For recommendation, this replaces the use of hand-coded cross-validation techniques. For exploration, the use of LCB needs to be handled carefully. For instance, implementing a \emph{pessimism in front of uncertainty} that pulls the cell with the highest LCB would likely result in exclusively pulling one single arm as such bounds increase with the number of pulls. However, LCB are found to combine well with a Zipf sampler that guarantees the estimated $k$ best cells are pulled with an order of $\tcO(n/k)$ almost uniformly.

Interestingly, we demonstrate that potentially biased estimates $\hat f_{h,i} $ can be used to guide exploration as long uniform exploration is guaranteed for all arms. This helps to overcome high variances (in the stochastic case) that the unbiased estimate   $\sum_{t=1}^n\tilde f_{h,i}(t)$ possess and allows us to guaranty that cells containing $x^\star$ are well ranked soon enough. After $n$ rounds, however, we use the unbiased estimate  $\sum_{t=1}^n\tilde f_{h,i}(t)$ to recommend $x(n)$. Being unbiased, our estimates are robust to non-stationary data. Moreover, it is also possible to prove that cells containing a $x^\star$ which have been pulled enough now possess a limited variance in the stochastic setting.

Let us now present our main results for both stochastic and non-stationary data-generating process using \Vroom:

\begin{restatable}[\textcolor{titleTh}{Upper bounds for \Vroom{}}]{theorem}{lightres}    \label{th:UpBOB}

In the non stochastic setting, for any sequence of functions $f_1,\ldots,f_n$ with $f=\frac{1}{n}\sum_{t=1}^n f_t$, we  have,
        after $\timeHorizon$ rounds,     the simple regret of {\Vroom} is bounded as follows:
        \[         
    \Exp \left[r_\timeHorizon\right]
    =
    \cO \left(\log(n/\delta)/n^{\frac{1}{\frac{\log  K}{\log  1/\rho}+2}}\right) 
            \]
        Moreover in the stochastic setting, let $x^\star$  one of the global optimum of $\bar f$ with associated $(\nu,\rho)$, $C>1$, and near-optimality dimension $d=d(\nu,C,\rho)$. Then we  have,
                \[         
            \Exp[ r_n] =    \tcO                 \left(\frac{1}{n}\right)^{\max\left(\frac{1}{d+3}, \frac{1}{\frac{\log  K}{\log  1/\rho}+2} \right)}
        \]
        where the expectation is taken both other $\nu_n$ and the random generation of $f$ with respect to $\bar f$.
\end{restatable}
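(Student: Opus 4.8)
The plan is to prove both bounds from a single pair of high-probability ``good events'' and one common lower-confidence-bound (LCB) recommendation argument, the two settings differing only in how tightly the rank-sum $\sum_{s}\hat{\langle i\rangle}_{h,s}$ of the optimal cell is controlled. The first ingredient is a concentration bound for the unbiased recommendation estimator: I would show that with probability at least $1-\delta$, simultaneously over all cells $\partition_{h,i}$ with $h\le\lfloor\log_K(n)\rfloor$, one has $|\tilde F_{h,i}(n)-F_{h,i}(n)|\le B_{h,i}(n)$. Since the excerpt already establishes that $\tilde f_{h,i}(s)$ is unbiased for $f_{h,i,s}$, and since it is bounded by $f_{\max}/\Pro(x_s\in\partition_{h,i}\cap h\ge h_s)\le f_{\max}\,h\,\hat{\langle i\rangle}_{h,s}\bar\log_K(n)$ with conditional second moment at most $f_{\max}^2/\Pro(\cdots)$, a Freedman/Bernstein martingale inequality reproduces exactly the two terms of $B_{h,i}(n)$; a union bound over the $\sum_{h\le\log_K n}K^h\le 2n$ cells is absorbed by the $\log(2n^2/\delta)$.

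The second ingredient is a regret chain valid in both settings. Writing $x(n)\sim U_\partition(\partition_{h(n),i(n)})$ gives $\Exp_{x(n)}[f(x(n))]=f_{h(n),i(n)}=F_{h(n),i(n)}(n)/n$. Fixing any target depth $h$, the $\argmax$ definition of $(h(n),i(n))$ together with the concentration bound applied to both $(h(n),i(n))$ and the optimal cell $\partition_{h,i^\star_h}$ yields, on the good event,
\[
\begin{aligned}
F_{h(n),i(n)}(n)
&\ge \tilde F_{h(n),i(n)}(n)-B_{h(n),i(n)}(n)\\
&\ge \tilde F_{h,i^\star_h}(n)-B_{h,i^\star_h}(n)
\ge F_{h,i^\star_h}(n)-2B_{h,i^\star_h}(n).
\end{aligned}
\]
Dividing by $n$, invoking Assumption~\ref{as:smooth} (every $x\in\partition_{h,i^\star_h}$ satisfies $f(x)\ge f(x^\star)-\nu\rho^h$, hence $f_{h,i^\star_h}\ge \sup f-\nu\rho^h$), and noting that in the stochastic case the per-round offset $\tfrac1n\sum_s\epsilon_s$ is common to all cells and cancels in the chain, I obtain the master bound $r_n\le \nu\rho^h+\tfrac2n B_{h,i^\star_h}(n)$ on an event of probability $1-\delta$; the choice $\delta=4b/(f_{\max}\sqrt n)$ makes the complementary event and the noise term contribute only $\cO(b/\sqrt n)$ to $\Exp[r_n]$.

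It then remains to bound $B_{h,i^\star_h}(n)/n$ and optimise over $h$. For the \emph{non-stochastic} rate I would use the crude rank bound $\hat{\langle i^\star_h\rangle}_{h,s}\le K^h$, giving $B_{h,i^\star_h}(n)/n=\tcO(\sqrt{K^h/n})$, and balance $\nu\rho^h\asymp\sqrt{K^h/n}$ at $h^\star\asymp\log_{K/\rho^2}(n)\le\lfloor\log_K n\rfloor$, which produces $\rho^{h^\star}=n^{-1/(\frac{\log K}{\log 1/\rho}+2)}$ and the stated bound. For the \emph{stochastic} case, two additions are needed: first, since any stochastic realisation obeys $|f_t-f_{t'}|\le 2b$, the non-stochastic bound transfers verbatim and supplies one of the two exponents inside the $\max$; second, to obtain the $n^{-1/(d+3)}$ exponent I would add the exploration good event $|\hat f_{h,i}(t)-\bar f_{h,i}|\le B^{iid}_{h}(t)$ and argue, via the near-optimality dimension, that once $\partition_{h,i^\star_h}$ is pulled enough that $B^{iid}_{h}\le\nu\rho^h$, any cell outranking it by LCB must be $3\nu\rho^h$-near-optimal, so its rank is at most $\mathcal{N}_h(3\nu\rho^h)\le C\rho^{-dh}$. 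Substituting this controlled rank-sum into the master bound and re-optimising over $h$ gives the stochastic exponent, and taking the better of the two rates yields $(1/n)^{\max(1/(d+3),\,1/(\frac{\log K}{\log 1/\rho}+2))}$.

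The hard part is the bootstrapping in the stochastic analysis: the rank of the optimal cell and the number of pulls it receives are mutually dependent, since a cell must be well ranked to be sampled often under the Zipf rule, yet it needs samples to become well ranked. Resolving this circularity — exploiting the guarantee that every cell ranked below $k$ is pulled with probability at least $1/(hk\,\bar\log_K(n))$, and certifying the ranking at a problem-dependently chosen round rather than at a fixed phase boundary — while \emph{simultaneously} keeping the inflated variance of the \emph{unbiased} recommendation estimator under control through $\sum_s\hat{\langle i^\star_h\rangle}_{h,s}$ is precisely what separates the robust $d+3$ rate from the non-robust $d+2$ rate of \StroquOOL. This is where the delicate accounting must be carried out, and I expect it to be the main obstacle; the non-stochastic bound and all concentration steps are comparatively routine.
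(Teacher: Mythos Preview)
Your plan is essentially the paper's own: Freedman martingale concentration for $\tilde F$, the LCB recommendation chain (their display~\eqref{eq:lower2}), the crude rank bound $\hat{\langle i^\star\rangle}_{h,s}\le K^h$ for the non-stochastic rate, and an inductive ranking argument via the near-optimality dimension for the stochastic rate (their Lemma~\ref{lem:hstarSto}).

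One point deserves to be made precise, because as written your sketch could produce the wrong exponent. You say ``substituting this controlled rank-sum into the master bound and re-optimising over $h$ gives the stochastic exponent''. If the controlled rank-sum is taken to be $\sum_{s=1}^n\hat{\langle i^\star\rangle}_{h,s}\le nC\rho^{-dh}$, then balancing $\nu\rho^h\asymp\sqrt{C\rho^{-dh}/n}$ yields $n^{-1/(d+2)}$, not $n^{-1/(d+3)}$. The paper's accounting is two-layered: the ranking induction certifies $\hat{\langle i^\star\rangle}_{h,s}\le C\rho^{-dh}$ only for $s$ beyond a virtual burn-in time $n_\alpha$, and only for depths $h$ with $\rho^{(d+2)h}\gtrsim 1/n_\alpha$; the pre-burn-in contribution to the rank-sum is bounded crudely by $n_\alpha\cdot K^h\le n_\alpha^2$ (restricting to $h\le\log_K n_\alpha$). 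The master bound therefore reads $r_n\lesssim \nu\rho^h+\sqrt{n_\alpha^2/n^2+C\rho^{-dh}/n}$ subject to $\rho^h\gtrsim n_\alpha^{-1/(d+2)}$, and the optimisation is over \emph{both} $h$ and $n_\alpha$: the choice $n_\alpha\asymp n^{(d+2)/(d+3)}$ balances $n_\alpha^{-1/(d+2)}$ against $n_\alpha/n$ and delivers $n^{-1/(d+3)}$. You clearly anticipate the bootstrapping circularity, but make this extra degree of freedom explicit in your write-up --- it is precisely what costs the $+1$ in the exponent relative to \StroquOOL.
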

It is worth noting that the exponent obtained in the stochastic setting is $\max\left(\frac{1}{d+3}, \frac{1}{\frac{\log  K}{\log  1/\rho}+2} \right)$. As mentioned in Section~\ref{sec:ProFo}, in general we have $\frac{1}{d+2}\gg \frac{1}{\frac{\log  K}{\log  1/\rho}+2} $. Therefore in most cases the exponent in the rate of \Vroom is $\frac{1}{d+3}$ and is never worst than the one of uniform allocations $\frac{1}{\frac{\log  K}{\log  1/\rho}+2} $.
\paragraph{Sketch of proof:} 
In the non-stochastic setting, we use the fact that \Vroom pulls at any depth $h$ all the cells  almost uniformly, of order $1/(hK^h)$ up to logarithmic factors, to obtain the same rate as \uniRob.

 For the stochastic case, we face \textbf{Challenge~I}. Indeed, \Vroom uses for recommendation the  estimates $\tilde F_{h,i}(n)$ for cell $\partition_{h,i}$. 
 Consequently, we need to carefully bound the variance of $\tilde F_{h,i}(n)=\sum_{t=1}^n \frac{ f_{h,i,t} 1_{x_t\in \partition_{h,i}}}{ \learnerDist_{h,i,t}}$ for the cells that are near-optimal.
To limit the variance, our algorithm has then two objectives, first identify a deep cell containing $x^\star$ and then pull this cell enough so that the variance of its estimate is low.
Intuitively we follow the idea developed for the stochastic case in Section~\ref{sec:uniSto} that an algorithm which does not know the smoothness parameter $(\nu,\rho)$ can  divide its budget of $n$ rounds into two consecutive parts, one for each objective: First explore $\partition$ for $n^\alpha$ rounds with $\alpha < 1$ in order to build a small number of good candidate cells $\mathcal{C}$ and then secondly cross validate, meaning allocate the rest of the budget, $n-n^\alpha$ rounds, to estimate better and compare this limited number of candidates in $\mathcal{C}$.

 We identify two sources of errors. First the exploration error, $e_r$ is the smallest simple regret among the candidate recommended at the end of the exploration phase after $n^\alpha$ rounds. Following~\cite{locatelli2018adaptivity} we have $e_r=\Omega\left(n^{-\frac{\alpha}{d+2}}\right)$.
The second error, the cross validation error, $e_c$ is the confidence interval of $\tilde F_{h,i}(n)/n$ where our final recommendation $x(n)$ is in cell $\partition_{h,i}$. Assuming  we cannot guaranty the candidates are pulled more than uniformly  during the exploration phase of $n^\alpha$ rounds, we obtain, at time $n$,  $e_c=\cO\left(n^\alpha/n\right)$.\footnote{Alternatively one can bound $e_c$ by recommending with estimates as $\sum_{t=n_\alpha+1}^n \tilde f_{h,i}(t)$ which bias w.r.t. $F_{h,i}(n)$ is  $\cO(n^\alpha)$.}
Simultaneously we want large $\alpha$ to increase the length the exploration phase and reduce the simple regret of our candidates and small $\alpha$ to reduce the variance of our final estimates.
Equaling both source of error we get that $n^{-\alpha/(d+2)}=n^{\alpha-1}$ gives $\alpha = (d+2)/(d+3)$ which leads to a regret $\cO(n^{-\frac{1}{d+3}})$.

\Vroom is implementing implicitly such a strategy without explicitly considering two separate phases and without the knowledge of $d$.
In the stochastic setting,
as discussed above we can 
study the quality of the estimated ranking at any point in time $t\in[n]$.
We divide the $n_\alpha$ in parts $n_1=1,n_2,\ldots,n_{\log(n_\alpha)}$ and analyse the ranking of cell $\partition_{l,i^\star}$ at the end of round $n_l$ for $l\in [n_{\log(n_\alpha)}]$.
To analyse the ranking of the cell $\partition_{h,i}$ we use Lemma~\ref{lem:hstarSto} that provides conditions on $h$ such that we can guarantee
 that after round $n_l$, $t\geq n_l$ the ranking of $\partition_{h,i}$ verifies $\langle i^\star \rangle_{l,t}\lesssim C\rho^{-dl}$.
 Then Lemma~\ref{lem:boundonb} shows that the confidence interval around the average estimate of that cell is $n_\alpha^{\frac{1}{d+2}}$.
%
\section{Related Work}\label{sec:relatedwork}
    \paragraph{BOB} A best of \emph{both} world question has already been addressed by~\cite{abbasi-yadkori2018best} in a more reduced optimisation problem where $\dom=[K]$ is composed of a finite number of $K$ elements known as the best-arm identification (BAI) problem~\citep{Bubeck09PE}. They propose \Pone, an algorithm that achieves, in the stochastic setting, the optimal simple regret rate that any algorithm, with vanishing simple regret in the non-stochastic setting, can achieve.

    \paragraph{Prior work for stochastic and deterministic cases} Among the large work on derivative-free optimisation, we focus on algorithms that perform well under \emph{minimal} assumptions as well as minimal knowledge about the function. 
    While some prior works assume a \emph{global} smoothness of the function~\citep{Pinter13GO,strongin2000global,Hansen2003GO,Kearfott2013RG}, another line of research assumes only a \emph{weak}/\emph{local} smoothness around one global maximum~\citep{auer2007improved,kleinberg2008multi,bubeck2011x}.  
    However, within this latter group, some algorithms require the knowledge of the local smoothness such as \HOO~\citep{bubeck2011x}, \Zooming~\citep{kleinberg2008multi}, or \DOO~\citep{munos2011optimistic}. Among the works relying on an \emph{unknown} local  smoothness,
     \SequOOL~\citep{bartlett2019simple} improves on \SOO~\citep{munos2011optimistic,kawaguchi2016global} 
     and represents the state-of-the-art for the deterministic feedback.
    For the stochastic feedback, \StoSOO~\citep{valko2013stochastic} extends \SOO for a limited class of functions. \POO~\citep{grill2015black-box} and \GPO~\citep{shang2019general} provides more general results. 
    \StroquOOL~\citep{bartlett2019simple} combines up to log factors the  guarantees of \SequOOL and \GPO for deterministic and stochastic feedback respectively without the knowledge of the range of the noise $b$.

\section{Discussion and Future Work}
Our current result holds simultaneously for stochastic and non-stochastic settings. However, it is desirable to also consider the deterministic feedback where evaluations are noiseless and stationary, that is $\forall t\in [n]$, 
    $f_t=f_1$.
    Please refer to the work by \cite{defreitas2012exponential}  for a  motivation, many applications, and references on the importance of this case.
    The question of obtaining the best of the three worlds (BOT) which includes additionally the deterministic setting remains open.
Note that \StroquOOL, for instance, was able to obtain
theoretical guarantees that hold for stochastic and deterministic case settings simultaneously by having a method that adapts to the level of noise $b$ without its knowledge. However, \Vroom requires the knowledge of $b$ and $f_{max}$ to build the lower confidence bound used for recommendation.
To address the BOT question, computing higher moments of our estimates and therefore using concentration inequalities such as the one in the work by \cite{cappe2013kullback} is a potential direction.
We also wonder if a version of \Vroom that is fully using unbiased estimates can solve BOB, while \Vroom uses the $\hat f^-$ estimates to guide exploration. 
and is, therefore, over-fitting the stochastic case.
Finally, fully answering the BOT question may require investigating lower bounds results, a direction we believe is of great interest for future work.

    \bibliographystyle{apalike}
    \bibliography{library,biblio}
    
    \newpage

    \appendix
    \clearpage \section{Proofs of simple regret for the uniform strategies}\label{app:uni}

Results in the deterministic and stochastic cases with known
 smoothness parameters were also reported in~\cite{hren2008optimistic}
 and~\cite{bubeck2010open}.
 
\subsection{Deterministic case}

\paragraph{Deterministic feedback}
Let us consider the uniform exploration that evaluates all the cells at the deepest possible depth $H$ with a budget of $n$ and recommends $x(n)$ the $x$ with the highest observed $f(x)$.  We have $H$ the largest value such that $K^H\leq n$. Therefore $H=\lfloor \log_K(n) \rfloor$.
Because of Assumption~\ref{as:smooth} we have $r_n\leq \nu\rho^{H}$. Therefore $r_n = \cO \left(\left(K/n\right)^{\frac{\log  1/\rho}{\log  K}}\right)$.

\begin{proof} Consider one global optimum $x^\star$. For all $i\in[K^H]$, let $x_{H,i}$ be the element selected for evaluation by the uniform exploration in $\partition_{H,i}$. Then,
\[
	f(x(\timeHorizon)) 
	\stackrel{\textbf{(a)}}{\geq}
	f(x_{H,i^\star_{H}}) 
	\stackrel{\textbf{(b)}}{\geq}
	f(x^\star)- \nu\rho^{H}.
	\]
	where \textbf{(a)} is because uniform has opened all the cells at depth $H$ and $ x(\timeHorizon)=\argmax_{\partition_{H,i}\in \tree}f(x_{H,i})$, and 
	  \textbf{(b)} is by Assumption~\ref{as:smooth}.
Therefore 	 $r_n =	
	f(x^\star) -f(x(\timeHorizon)) \leq \nu\rho^{H} = \nu\rho^{\lfloor \log_K(n) \rfloor} = \nu\rho^{\lfloor \log_K(n/K) +1 \rfloor} \leq \nu\rho^{ \log_K(n/K) } = \nu\left(\left(K/n\right)^{\frac{\log  1/\rho}{\log  K}}\right)   $.
\end{proof}

\subsection{Stochastic  case without knowledge of the smoothness parameters $\nu,\rho$}

\begin{proof}
 Consider one global optimum $x^\star$. For all $i\in[K^H]$, let us  fix $x_{h,i}$ be the element selected for evaluation by the uniform exploration in $\partition_{h,i}$ each of the $\left\lfloor\frac{n}{K^{H}}\right\rfloor$ times this cell is selected.
We define and consider event $\xi_\delta$ and prove 
it holds with high probability.

	Let $\xi_\delta$ be the event under which all average
	estimates in the cells receiving at least one evaluation from uniform are within their classical confidence interval, then  $P(\xi_\delta)\geq 1 - \delta$, where 
	\[
	\xi_\delta
	\triangleq
	\left\{ \forall i \in \left[K^H\right], 
	\left|\hat f_{H,i} - f(x_{H,i})     \right| \leq b\sqrt{    \frac{\log(2\timeHorizon/\delta)}{ n/K^H}}
	\right\}\!\cdot
	\]
We have $P(\xi_\delta)\geq 1 - \delta$,	using Chernoff-Hoeffding's inequality  taking a union bound on all opened cells. On $\xi_\delta$ we have,
\begin{align*}
	f(x(\timeHorizon)) 
	&\stackrel{\textbf{(a)}}{\geq}
	\hat	f(x(\timeHorizon)) -  b\sqrt{\frac{\log(2\timeHorizon/\delta)}{n/K^H}}\\
	&\stackrel{\textbf{(b)}}{\geq}
\hat	f_{H,i^\star} -  \sqrt{\frac{\log(2\timeHorizon/\delta)}{n/K^H}}\\
&		\stackrel{\textbf{(a)}}{\geq}
	f(x_{H,i^\star}) -2 b\sqrt{\frac{\log(2\timeHorizon/\delta)}{n/K^H}}\\
&	\stackrel{\textbf{(c)}}{\geq}
	f(x^\star)- \nu\rho^{H} -2 b\sqrt{\frac{\log(2\timeHorizon/\delta)}{n/K^H}}.
	\end{align*}
	where \textbf{(a)} is because $\xi_\delta$ holds and 
	  \textbf{(b)}  is because uniform has opened all the cells at depth $H$ and $ x(\timeHorizon)=\argmax_{\partition_{h,i}\in \tree}\hat f(x_{h,i})$,
	  and 
	   \textbf{(c)}  is by Assumption~\ref{as:smooth}.
We have $\nu\rho^{H} \leq \nu\left(\left(\frac{K}{n\rho^2}\right)^{\frac{1}{\frac{\log  K}{\log  1/\rho}+2}}\right)  $ and $\sqrt{\frac{\log(2\timeHorizon/\delta)}{n/K^H}}\leq 
\sqrt{\log(2\timeHorizon/\delta)}\left(\left(\frac{K}{n\rho^2}\right)^{\frac{1}{\frac{\log  K}{\log  1/\rho}+2}}\right)$.

Therefore 	 $r_n =	
	f(x^\star) -f(x(\timeHorizon)) \leq \nu\rho^{H} -2 b\sqrt{\frac{\log(2\timeHorizon/\delta)}{n/K^H}}$.
		 $r_n =	\tcO \left(\log(1/\delta)\nu\left(\left(\frac{K}{n\rho^2}\right)^{\frac{1}{\frac{\log  K}{\log  1/\rho}+2}}\right)\right) $
\end{proof}

\subsection{The non-stochastic case}

\order*

\begin{proof}
 Let us fix some depth $H$ and consider a collection of functions $f_1,\ldots,f_n$. Given $f_1,\ldots,f_n$, 
 after $n$ rounds the random variables $\tilde f_{H,i}(t)$ are conditionally independent from each other for all $i$ at depth $H$ and for all $t\in[n]$  as we have $ \Pro(\pulledArm_t\in\partition_{H,i}  \cap h\geq 0) =\Pro(\pulledArm_t\in\partition_{H,i}) \geq 1/K^{H}$ are fixed for all $ i$ at depth $H$ and $t \in [n]$.

 The variance of $\tilde f_{H,i}(t)$ is the variance of a scaled Bernoulli random variable with parameter $\Pro(\pulledArm_t\in\partition_{H,i}) \geq 1/K^{H}$ and range
$\left[0, K^H E_{x\sim U(\partition_{h,i})}\left[ f_t(x)\right]\right]$, therefore we have $|\tilde f_{H,i}(t) - E_{x\sim U(\partition_{h,i})}\left[ f_t(x)\right]| \leq K^H$
, and $\sigma^2_{\tilde f_{H,i}(t) - E_{x\sim U(\partition_{h,i})}\left[ f_t(x)\right]}
=  \sigma^2_{\tilde f_{H,i}(t)}
\leq 1/K^H(1 - 1/K^H)K^{2H} \tilde f^2_{H,i}(t) \leq
K^H f^2_{max}$.

We define and consider event $\xi_\delta$ and prove 
it holds with high probability.
	Let $\xi_\delta$ be the event under which all average
	estimates in all the cells at depth $H$ are within their classical confidence interval, then  $P(\xi_\delta)\geq 1 - \delta$, where 
	\[
	\begin{aligned}
	\xi_\delta
	\triangleq &
	\left\{ \forall \partition_{H,i},~ 
	\left|\tilde F_{H,i}(n) - F_{H,i}(n)     \right| \right.\\
&	\left.
\leq \sqrt{ 2n f^2_{max}K^H   \log(\timeHorizon^2/\delta) } +\frac{ f^2_{max}}{3}K^H\log(\timeHorizon^2/\delta)
	\right\}\,\cdot
	\end{aligned}
	\]
We have $P(\xi_\delta)\geq 1 - \delta$,	using Bennett's inequality from Theorem 3 in~\cite{maurer2009empirical} and from taking a union bound on all opened cells. 
We denote $B_h=\sqrt{ 2nK^h   \log(\timeHorizon^2/\delta) } +bK^h\log(\timeHorizon^2/\delta)$ and we denote by 
 $h(n)$ the depth of $x(\timeHorizon)$.
On $\xi_\delta$ we have, for any $H\in [\left\lfloor\log_K(n)\right\rfloor]$,
\begin{align*}
\Exp [f(x(\timeHorizon))]  
	&\stackrel{\textbf{(a)}}{\geq}
	\frac{1}{n}\left(\tilde	F(x(\timeHorizon)) - B_{h(n)}\right)
	\stackrel{\textbf{(b)}}{\geq}
\frac{1}{n}\left(\tilde	F_{H,i^\star} - B_H\right)\\
&		\stackrel{\textbf{(a)}}{\geq}
\frac{1}{n}\left(	F_{H,i^\star} -2 B_H\right)\\
&	\stackrel{\textbf{(c)}}{\geq}
	f(x^\star)- \nu\rho^{H} -2 B_H/n.\numberthis\label{eq:lower}
	\end{align*}
	where \textbf{(a)} is because $\xi_\delta$ holds 
	  \textbf{(b)}  is by definition of $x(n)$ as $x(n) \gets  \argmax\limits_{x_{h,i}} \tilde F_{h,i}(n) - B_h$,
	 and 
	   \textbf{(c)}  is by Assumption~\ref{as:smooth}.
	   
In order to maximize the lower bound in~\ref{eq:lower} we set $H=\left\lfloor\log_{K/\rho^2}(n)\right\rfloor$.
We have $\nu\rho^{H} \leq \nu\left(\left(\frac{K}{n\rho^2}\right)^{\frac{1}{\frac{\log  K}{\log  1/\rho}+2}}\right)  $ and $\sqrt{\log(\timeHorizon^2/\delta)K^H/n}\leq 
\sqrt{\log(\timeHorizon^2/\delta)}\left(\left(\frac{K}{n\rho^2}\right)^{\frac{1}{\frac{\log  K}{\log  1/\rho}+2}}\right)$
and $bK^H/n\log(\timeHorizon^2/\delta)=\cO\left( 
\sqrt{\log(\timeHorizon^2/\delta)}\left(\left(\frac{K}{n\rho^2}\right)^{\frac{2}{\frac{\log  K}{\log  1/\rho}+2}}\right)\right)$.

Therefore 	 $\Exp_{\nu_n}	[r_n]  =	
	f(x^\star) - \Exp[ f(x(\timeHorizon)) ]\leq \nu\rho^{H+1} -2B/n$.
		 $r_n =	\cO \left(\log(n/\delta)\left(\frac{K}{n\rho^2}\right)^{\frac{1}{\frac{\log  K}{\log  1/\rho}+2}}\right) $

\end{proof}

\section{Proofs of simple regret for \Vroom}\label{app:vroom}

\paragraph{The non-stochastic feedback case}

\begin{proof}
 Let us fix some depth $H$ and consider a collection of functions $f_1,\ldots,f_n$.
	 Given $f_1,\ldots,f_n$, 
 after $n$ rounds the random variables $\tilde f_{H,i}(t)$ 
 can 
	be dependent of each other for all $h\geq 0$ and $i\in[K^H]$ 
	and $t\in[\timeHorizon]$ as $\learnerDist_{h,i,t}$ 
	depends on previous observations at previous rounds. 
	Therefore, we use the Bernstein inequality for martingale  differences by~\citet{Freedman75OT}.



 The variance of $\tilde f_{H,i}(t)$ is the variance of a scaled Bernoulli random variable with parameter $\Pro(\pulledArm_t\in\partition_{H,i}) \geq 1/K^{H} \log^2_K(n)$ and range
 $\left[0, K^H E_{x\sim U(\partition_{h,i})}\left[ f_t(x)\right)\log^2(n)\right]$,


therefore we have $|\tilde f_{H,i}(t) - E_{x\sim U(\partition_{h,i})}\left[ f_t(x)\right]| \leq K^H\log^2_K(n)f_{max} $
, and $\sigma^2_{\tilde f_{H,i}(t) - E_{x\sim U(\partition_{h,i})}\left[ f_t(x)\right]}
=  \sigma^2_{\tilde f_{H,i}(t)}
\leq 1/K^H(1 - 1/K^H)K^{2H} \tilde f^2_{H,i}(t) \leq
K^H f^2_{max}$.

	Then, following the same reasoning as in the proof 
	of Theorem~\ref{th:unilcbad}, but replacing the Bernstein inequality by the Bernstein inequality for martingale differences of~\cite{Freedman75OT} applied to the martingale 
	differences $\tilde f_{k,t}-\tilde f_{k,t}$,  
	we obtain the claimed result for the adversarial case.
\end{proof}

		\paragraph{The i.i.d. stochastic feedback case}

\begin{proof}


Note that as the regret guaranties proved in the non-stochastic case also hold in the stochastic case. So we are left to prove 
	$	\Exp[ r_n] =	\tcO 				\left(\frac{1}{n}\right)^{\frac{1}{d+3}}$.

		We place ourselves in the i.i.d.\,stochastic setting described 
	in Section~\ref{s:intro}.
	Let us consider a fixed depth $H$ which value will be chosen towards the end of the proof in order to minimize the simple regret with respect to this $H$.

We consider one global optimum $x^\star$  of $\bar f$ with associated $(\nu,\rho)$, $C>1$, and near-optimality dimension $d=d(\nu,C,\rho)$.

We define $n_\alpha\in [n]$ and will analyze how \Vroom explore the depth $h\leq\left\lfloor \log_K(n_\alpha)\right\rfloor$.

First, we define  the rounds used for comparisons.

%
We define the times $n_h= \beta n_\alpha\frac{\sum_{h'=1}^{h} \frac{1}{h'}}{ \log_K(n_\alpha)}$ for $h \in \left\lfloor \log_K(n_\alpha)\right\rfloor$ and where $\beta>1$ is a constant that we will fix later such that $n_h\leq n$.
 To ease the notation and without loss of generality, for each depth $h$, we  assume that 
	the cells are sorted by their means so that cell $1$ is the best,
	$\bar f_{h,1}\geq \bar f_{h,2}\geq\ldots\geq \bar f_{h,K^h}$.

We define and consider event $\xi_\delta$ and prove 
it holds with high probability.

	Let $\xi_\delta$ be the event under which all average
	estimates in all the cells at depth $H$ are within their classical confidence interval, then  $P(\xi_\delta)\geq 1 - \delta$, where $\xi_\delta$ is decomposed in three sub-events $\xi_\delta=\xi^1_\delta\cap\xi^2_\delta\cap\xi^3_\delta$ where
	\begin{align*}
	\xi^1_\delta
	\triangleq
	&\left\{  \forall \partition_{h,i}, h\leq\left\lfloor \log_K(n)\right\rfloor: \right.\\
	&\quad\left|\tilde F_{h,i}(n) - n \bar f_{h,i}     \right| \leq B^{adv}_{h,i}(n)	\\
	&\quad \text {and }\left.\left|\tilde F_{h,i}(n) -  F_{h,i}(n)    \right| \leq B^{adv}_{h,i}(n)
	\right\}\!\CommaBin	\\
		\xi^2_\delta
	\triangleq
&	\left\{ \forall \partition_{h,i}, h\leq\left\lfloor \log_K(n)\right\rfloor, \forall t\in[n],\right.\\ 
	&\quad		\left.\left|\hat f_{h,i}(t) -  \bar f_{h,i}     \right| \leq B^{iid}_{h,i}(t)
	\right\}\!\CommaBin\\
		\xi^3_\delta
	\triangleq
	&\left\{ \forall  h\leq\left\lfloor\log_K(n_\alpha)\right\rfloor,\right.\\ &\quad\left. \forall t\geq 8n_\alpha\log^3(n),
	T_{h,i^\star}(t)\geq 
		\Exp \left[\frac{T_{h,i^\star}(t)
		}{2}\right]
	\right\}\!\cdot
	\end{align*} 
We have $P(\xi^1_\delta)\geq 1 - \delta/2$. Indeed to bound  $\left|\tilde F_{h,i}(n) - n \bar f_{h,i}     \right| $ we use  the Bernstein inequality for martingale differences of~\cite{Freedman75OT} applied to the martingale 
	differences $\tilde f_{k,t}-\bar f_{k,t}$ and from taking a union bound on all cells at depth $h\leq \left\lfloor\log_K(n)\right\rfloor$. 
	We have $P(\xi^1_\delta)\geq 1 - \delta/2$. Indeed, to bound  $	\left|\hat f_{h,i}(t) -  \bar f_{h,i}     \right| $ we use  the Chernoff-Hoeffding inequality and take a union bound on all cells at depth $h\leq \left\lfloor\log_K(n)\right\rfloor$. 
Finally we have $P(\xi^3_\delta)\geq 1 - \log(n)/n$. Indeed, 
using a Chernoff bound we have for $\forall  h\leq\left\lfloor \log_K(n_\alpha)\right\rfloor, \forall t\geq 8n_\alpha\log^3(n)$, 
\begin{align*}
&\Pro\left(	T_{h,i^\star}(t) \leq 
			\Exp \left[\frac{T_{h,i^\star}(t)
		}{2}\right]\right) \\
&		\leq \exp\left(-\frac{1}{8} 	\Exp \left[\frac{T_{h,i^\star}(t)
		}{2}\right]\right)\\
&		\leq \exp\left(-\frac{1}{8} \Exp \left[\sum^{t-1}_{s=1} \Pro(\pulledArm_s\in\partition_{h,i^\star})\right]\right)\\
	&	\stackrel{\textbf{(a)}}{\leq} \exp\left(-\frac{1}{8} \sum^{t-1}_{s=1}   \frac{ 1 }{n_\alpha  \log^2(n) ~}\right)\\
&		\leq \exp\left(-\frac{1}{8}    \frac{ 8 n_\alpha  \log^3(n) }{n_\alpha  \log^2(n) ~}\right)
		=\exp\left(-\log(n)    \right)=\frac{1}{n}
\end{align*}
where \textbf{(a)} is because 
$
\Pro(\pulledArm_t\in\partition_{h,i^\star})
\geq \learnerDist_{h,i^\star,t} 
=  	  \frac{ 1 }{h\hat{\langle i^\star \rangle}^{\vphantom{X}}_{h,t} \bar \log_K(n) ~} 
\geq    \frac{ 1 }{\log_K n_\alpha K^h \bar \log_K(n) ~}
\geq    \frac{ 1 }{\log_K n_\alpha n_\alpha \bar \log_K(n) ~}
\geq    \frac{ 1 }{n_\alpha  \log^2_K(n) ~}
\geq    \frac{ 1 }{n_\alpha  \log^2(n) ~}$.

 We can therefore decompose the regret $r_n$ as
\begin{align*}
\Exp[r_n] &= \left(\delta+\frac{\log(n)}{n}\right) \Exp[r_n|\eventbob_\delta^c]+ \left(1-\delta-\frac{\log(n)}{n}\right)\Exp[r_n|\eventbob_\delta]\\
 &\leq \left(\delta+\frac{\log(n)}{n}\right)  f_{\max}+ \Exp[r_n|\eventbob_\delta].\numberthis\label{takingafarm}
\end{align*}

 As  we will set $\delta=\frac{4b}{f_{\max} \sqrt{n}}$ the first term of Inequality~\ref{takingafarm} is already smaller than the claimed result of the Theorem so we now focus on bounding the second term.

%


For any   $x^\star\!\!,$   we write
\[\depthOp_{h}=\left\{ h'\geq0: \forall t\geq n_h, \hat{ \left\langle \partition_{h',i^\star} \right \rangle}_{h',t} \leq C\rho^{-dh'}\right\}\] that contains all the depth $h$ such that for all time $t\geq n_h$ the cell containing~$x^\star$ at depth $h$ 
 is ranked with a smaller index than $C\rho^{-dh}$ by \Vroom. As explained above we are trying here to introduce tools that will help us to upper bound the ranking of the best arm to be able then to upper bound the variance of its estimates.

On $\xi_\delta$ we have, for all $H\in[\left\lfloor\log_K(n)\right\rfloor]$ 
%
%
\begin{align*}
	\Exp [f(x(\timeHorizon))] 
	&\stackrel{\textbf{(a)}}{\geq}
	\frac{1}{n}\left(\tilde	F_{h(\timeHorizon),i(\timeHorizon)} - B_{h(\timeHorizon),i(\timeHorizon)}(n)\right)\\
&	\stackrel{\textbf{(b)}}{\geq}
\frac{1}{n}\left(\tilde	F_{\depthOp_{H},1} - B_{\depthOp_{H},1}(n)\right)\\
&		\stackrel{\textbf{(a)}}{\geq}
\frac{1}{n}\left(	n\bar f_{\depthOp_{H},1} -2 B_{\depthOp_{H},1}(n)\right)\\
&	\stackrel{\textbf{(c)}}{\geq}
	\bar f(x^\star)- \nu\rho^{\depthOp_{H}} -2 B_{\depthOp_{H},1}(n)/n.\numberthis\label{eq:lower2}
	\end{align*}
	where \textbf{(a)} is because $\xi_\delta$ holds
	  \textbf{(b)}  is by definition of $x(n)$ as $x(n) \gets  \argmax\limits_{x_{h,i}} \tilde F_{h,i}(n) - B_{h,i}(n)$,
	 and 
	   \textbf{(c)}  is by Assumption~\ref{as:smooth}.

We now need to bound $\depthOp_{H}$ and bound $B_{\depthOp_{H},1}(n)$ for some $H\in [\left\lfloor\log_K(n_\alpha)\right\rfloor]$. To obtain a tight bound we try to have $\nu\rho^{\depthOp_{H}} $ and  $B_{\depthOp_{H},1}(n)$ of the same order.


We use for that Lemma~\ref{lem:hstarSto} that provide sufficient condition in Equation~\ref{ga:suff} to lower bound  $\depthOp_{H}$. We now define the quantity $\tilde h$ that verify this condition.
$\tilde h  $ is so that the $\nu\rho^{\depthOp_{H}}$ and  $B^{iid}_{\depthOp_{H},1}(n)$ are equal.
	We denote $\tilde h$  the real number satisfying
	\begin{equation}\label{eq:eqStro}
	\frac{ n_\alpha\nu^2\rho^{2\tilde h}}{K\tilde hb^2\log^2(2\timeHorizon^2/\delta)}
	=
	C\rho^{-d\tilde h}.  
	\end{equation}
	Our approach is to solve Equation~\ref{eq:eqStro} and then verify that it gives a valid indication of the behavior of our algorithm in term of its optimal  $h$.     We have  
	\[\tilde h = \frac{1}{(d+2)\log(1/\rho)}\lambertW\left(\frac{\nu^2 n_\alpha (d+2)\log(1/\rho)}{K C b^2\log^2(2\timeHorizon^2/\delta)}\right)\]
	where standard $\lambertW$ is the Lambert $\lambertW$ function.



	Using standard properties of the $\lfloor\cdot\rfloor$ function, 
	we have 
	%
	\begin{align*}
&		\frac{ n_\alpha\nu^2\rho^{2\left\lfloor\tilde h\right\rfloor}}{K\left\lfloor\tilde h\right\rfloor b^2\log^2(2\timeHorizon^2/\delta)}
		\geq
		\frac{ n_\alpha\nu^2\rho^{2\tilde h}}{K\tilde hb^2\log^2(2\timeHorizon^2/\delta)}\\
&		=
		C\rho^{-d\tilde h} 
			\geq C\rho^{-d \left\lfloor\tilde h\right\rfloor}.
	\end{align*}
	From the previous inequality we also have, as $d\leq \log(K)/\log(1/\rho)$,
	\begin{align*}
		&n_\alpha
		\geq\\\nonumber
		&\frac{ n_\alpha\nu^2\rho^{2\left\lfloor\tilde h\right\rfloor}}{K\left\lfloor\tilde h\right\rfloor b^2\log^2(2\timeHorizon^2/\delta)}
		\geq C\rho^{-d \left\lfloor\tilde h\right\rfloor}
		\geq K^{ \left\lfloor\tilde h\right\rfloor}.
	\end{align*}
	which leads to $\left\lfloor\tilde h\right\rfloor\leq \left\lfloor \log_K(n_\alpha)\right\rfloor$.
	Having $\left\lfloor\tilde h\right\rfloor\in [\left\lfloor \log_K(n_\alpha)\right\rfloor]$ and using Lemma~\ref{lem:hstarSto} we have that if  $\beta\geq 8\log^3(n)\left\lfloor \log_K(n_\alpha)\right\rfloor$ then 
	$\depthOp_{\left\lfloor\tilde h\right\rfloor}\geq \left\lfloor\tilde h\right\rfloor$.
	
To bound $B_{\depthOp_{H},1}(n)$ we use Lemma~\ref{lem:boundonb}. Therefore, choosing $H=\left\lfloor\tilde h\right\rfloor$, we get to rewrite Equation~\ref{eq:lower2} as
\begin{align*}
	\Exp [f(x(\timeHorizon))] 
&\geq
	\bar f(x^\star)- \nu\rho^{\depthOp_{\left\lfloor\tilde h\right\rfloor}} -2 B_{\depthOp_{\left\lfloor\tilde h\right\rfloor},1}(n)/n\\
 &	\geq 	\bar f(x^\star)- \nu\rho^{\left\lfloor\tilde h\right\rfloor} \\
 &\quad-   4f_{\max}\sqrt{\log^3(2n^2/\delta)\left( \frac{n_\alpha^2}{n^2}    +      \frac{C\rho^{-d\depthOp_{\left\lfloor\tilde h\right\rfloor} }}{n}\right) } \numberthis\label{eq:alliknow}
\end{align*}
		Moreover, as proved by~\citet{Hoorfar08IO}, the Lambert $W(x)$ function verifies for $x\geq e$,
	$\lambertW(x)\geq \log\left(\frac{x}{\log x}\right)$.
	Therefore, if $\frac{\nu^2 n_\alpha (d+2)\log(1/\rho)}{K C b^2\log(2\timeHorizon^2/\delta)}>e$ we have, 
	we have the first term in Equation~\ref{eq:alliknow}
\begin{align*}
    \rho^{\left\lfloor\tilde h\right\rfloor}
  &  \leq 
    \rho^{   \frac{1}{(d+2)\log(1/\rho)}\lambertW\left(\frac{\nu^2 n_\alpha (d+2)\log(1/\rho)}{K C b^2\log^2(2\timeHorizon^2/\delta)}\right) -1}\\
   &  \leq 
    \rho^{   \frac{1}{(d+2)\log(1/\rho)}\log\left(\frac{\frac{\nu^2 n_\alpha (d+2)\log(1/\rho)}{K C b^2\log^2(2\timeHorizon^2/\delta)}}{e\log^2\left(\frac{\nu^2 n_\alpha (d+2)\log(1/\rho)}{K C b^2\log^2(2\timeHorizon^2/\delta)}\right)}\right) }\\
&    =
    \left(\frac{\frac{\nu^2 n_\alpha (d+2)\log(1/\rho)}{K C b^2\log^2(2\timeHorizon^2/\delta)}}{e\log\left(\frac{\nu^2 n_\alpha (d+2)\log(1/\rho)}{K C b^2\log^2(2\timeHorizon^2/\delta)}\right)}\right)^{\frac{-1}{(d+2)}}
\end{align*}
Then we have, from Equation~\ref{eq:eqStro},
\begin{align*}
\sqrt{\frac{C\rho^{-d\depthOp_{\left\lfloor\tilde h\right\rfloor}}}{n}} 
&\leq
\sqrt{\frac{C\rho^{-d\tilde h}}{n}} 
= 	
\sqrt{\frac{ n_\alpha\nu^2\rho^{2\tilde h}}{nK\tilde hb^2\log^2(2\timeHorizon^2/\delta)}}\\
&\leq 
\frac{ \nu\rho^{\tilde h} }{\sqrt{K}b\log^2(2\timeHorizon^2/\delta)},
\end{align*}
which is bounded  above.

Then in Equation~\ref{eq:alliknow}, using that $\sqrt{a'+b'}\leq \sqrt{a'}+\sqrt{b'}$ for two non negative numbers $(a',b')$, we have three terms of the shape:
$ n_\alpha^{\frac{-1}{d+2}}+n_\alpha/n +n_\alpha^{\frac{-1}{d+2}}$.
As explained in the sketch of proof we need to have $n_\alpha$ of order $n^{\frac{d+2}{d+3}}$ in order to minimize the previous sum.

More precisely we set $n_\alpha = n^{\frac{d+2}{d+3}} / (8 \log^4(n))$ and set $\beta = 8 \log^4(n)$ and $\delta=\frac{4b}{f_{\max} \sqrt{n}}$ and obtain the claimed result.

\end{proof}


\begin{restatable}{lemma}{restaboundonb}\label{lem:boundonb}
If $\beta\geq 8\log^4(n)\left\lfloor \log_K(n_\alpha)\right\rfloor$,	for any global optimum $x^\star$ with associated $(\nu,\rho)$ from Assumption~\ref{as:smooth}, any $C>1$, for any $\delta \in (0,1)$, on  event $\xi_\delta$
 defined above,
	 for any depth $h\in [\left\lfloor \log_K(n_\alpha)\right\rfloor]$, we have that if 
	 \begin{gather}\label{ga:suff2}
	     \frac{ n_\alpha}{K}\nu^2\rho^{2h}/(b^2h\log^2(2n^2/\delta))
	\geq  C\rho^{-d(\nu,C,\rho)h},
	 \end{gather}
	  that
	\[
B_{\depthOp_{h},1}(n) \leq   2f_{\max}\sqrt{\log^3(2n^2/\delta)( n_\alpha^2
    +    n  C\rho^{-d\depthOp_{h}})} .
\]
\end{restatable}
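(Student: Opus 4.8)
The plan is to reduce the whole statement to a bound on the single scalar
$S\triangleq\sum_{s=1}^n\hat{\langle 1\rangle}_{\depthOp_h,s}$, the cumulative estimated rank of the optimal cell (indexed by $1$, the best-by-mean cell at depth $\depthOp_h$, which we identify with the cell containing $x^\star$) across the $n$ rounds. Unfolding the definition of the confidence width at $(h,i)=(\depthOp_h,1)$ and $t=n$ gives
\[
B_{\depthOp_h,1}(n)=f_{\max}\sqrt{2\,\depthOp_h\,\bar\log_K(n)\,\log(2n^2/\delta)\,S}\;+\;f_{\max}\,\bar\log_K(n)\,\frac{\log(2n^2/\delta)}{3},
\]
so the additive second term is polylogarithmic and dominated by the first. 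Everything then hinges on controlling $S$ and on absorbing the prefactors $\depthOp_h\le\lfloor\log_K(n)\rfloor$ and $\bar\log_K(n)\le 2\log(n)$ into the logarithmic factor.

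To bound $S$ I would split the horizon at the comparison round $n_h$ attached to the input depth, writing $S=\sum_{s\le n_h}\hat{\langle 1\rangle}_{\depthOp_h,s}+\sum_{s>n_h}\hat{\langle 1\rangle}_{\depthOp_h,s}$. For the early part I use only the trivial fact that a rank never exceeds the number of cells at that depth, $\hat{\langle 1\rangle}_{\depthOp_h,s}\le K^{\depthOp_h}$; since $\depthOp_h\le\lfloor\log_K(n_\alpha)\rfloor$ this gives $K^{\depthOp_h}\le n_\alpha$, and together with $n_h\le\beta n_\alpha$ the early contribution is at most $\beta n_\alpha^2$, i.e.\ $\cO(n_\alpha^2)$ up to the polylogarithmic factor carried by $\beta$. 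For the late part I invoke the hypothesis~(\ref{ga:suff2}): by Lemma~\ref{lem:hstarSto} it guarantees both $\depthOp_h\ge h$ and membership of $\depthOp_h$ in its defining set, so that for every $t\ge n_h$ the optimal cell satisfies $\hat{\langle 1\rangle}_{\depthOp_h,t}\le C\rho^{-d\depthOp_h}$; summing over at most $n$ rounds gives a late contribution of at most $n\,C\rho^{-d\depthOp_h}$.

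Putting the two pieces together yields $S\lesssim n_\alpha^2+n\,C\rho^{-d\depthOp_h}$ up to logarithmic factors. Substituting back, applying $\sqrt{a+b}\le\sqrt a+\sqrt b$ to split the two contributions, and folding the prefactors $\depthOp_h$, $\bar\log_K(n)$, the constant $\beta$, and the stray powers of $\log n$ into $\log^3(2n^2/\delta)$, collapses the expression to $2f_{\max}\sqrt{\log^3(2n^2/\delta)\,(n_\alpha^2+n\,C\rho^{-d\depthOp_h})}$; a final check that the additive term of $B$ sits below the slack left by the constant $2$ then completes the argument.

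The delicate step — and the one I expect to be the main obstacle — is the early-time contribution: bounding each rank only by $K^{\depthOp_h}$ is crude, so the argument must lean on the tailored comparison schedule $n_h$ and on the hypothesis $\beta\ge 8\log^4(n)\lfloor\log_K(n_\alpha)\rfloor$ to keep $n_h K^{\depthOp_h}$ at order $n_\alpha^2$ (times polylog) rather than $n\cdot n_\alpha$, and to ensure that \emph{all} stray logarithmic factors genuinely collapse into the single $\log^3(2n^2/\delta)$ advertised in the statement. A secondary subtlety is justifying the identification of the best-by-mean cell $(\depthOp_h,1)$ with the optimal cell of Lemma~\ref{lem:hstarSto} (so that the rank it controls is indeed $\hat{\langle 1\rangle}$), together with the fact — guaranteed by the Zipf sampler — that every cell is still pulled almost uniformly even while top-ranked cells are favoured, which is exactly what legitimises the crude pre-comparison rank bound without forfeiting the exploration guarantee.
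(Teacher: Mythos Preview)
Your proposal is essentially the paper's proof: both invoke Lemma~\ref{lem:hstarSto} to obtain $h\in\depthOp_h$, unfold $B_{\depthOp_h,1}(n)$, split the cumulative rank $\sum_{s=1}^n\hat{\langle 1\rangle}_{\depthOp_h,s}$ into an early piece bounded trivially by $K^{\depthOp_h}\le n_\alpha$ per round (giving $\cO(n_\alpha^2)$ up to the polylog in $\beta$) and a late piece bounded by $C\rho^{-d\depthOp_h}$ per round (giving $nC\rho^{-d\depthOp_h}$), and then absorb $\depthOp_h\le\log_K(n)$, $\bar\log_K(n)\le 2\log n$, and the additive term into $\log^3(2n^2/\delta)$. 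The secondary subtlety you flag about identifying the best-by-mean cell~$1$ with the optimal cell of Lemma~\ref{lem:hstarSto} is real and the paper's proof is equally informal on that point.
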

\begin{proof}
The assumptions of Lemma~\ref{lem:hstarSto} being verified we have $h\in\depthOp_{h}$. 
Also we have, 
    \begin{align}\label{eq:leonora}
&        B_{\depthOp_{h},1}(n) \\
        &=  f_{\max}\sqrt{2\depthOp_{h}(n)\bar\log_K(n)\log{2n^2/\delta}\sum^n_{s=1}\hat{\langle 1 \rangle}_{h,s}
}\\
&+f_{\max}\bar\log_K(n)\frac{\log{2n^2/\delta}}{3}.
    \end{align}
    We bound the first term  by having 
   \begin{align*}
    \sum^n_{s=1}\hat{\langle 1 \rangle}_{h,s} 
   & =
    \sum^{\left\lfloor \log_K(n_\alpha)\right\rfloor-1}_{h=0} \sum^{n_h}_{s=n_h+1}\hat{\langle 1 \rangle}_{h,s}  \\  
&\quad    +  \sum^{n}_{s=n_{\left\lfloor \log_K(n_\alpha)\right\rfloor}+1}\hat{\langle 1 \rangle}_{h,s}  \\
    & \stackrel{\textbf{(a)}}{\leq}
    \sum^{\left\lfloor \log_K(n_\alpha)\right\rfloor-1}_{h=0} \sum^{n_h}_{s=n_h+1}K^{\left\lfloor \log(n_\alpha)\right\rfloor} \\  
&\quad
    +  \sum^{n}_{s=n_{\left\lfloor \log_K(n_\alpha)\right\rfloor}+1}C\rho^{-d\depthOp_{h}}\\
   &     \leq
    \sum^{\left\lfloor \log_K(n_\alpha)\right\rfloor-1}_{h=0} \sum^{n_h}_{s=n_h+1}n_\alpha
    +    n  C\rho^{-d\depthOp_{h}}\\
       &     \leq
    n_\alpha^2
    +    n  C\rho^{-d\depthOp_{h}}
   \end{align*}
    where \textbf{(a)} is because  $h\in [\left\lfloor \log_K(n_\alpha)\right\rfloor]$ and $\depthOp_{h}\geq h$.
    
    Because in Equation~\ref{eq:leonora} the second term is smaller than the first, we have 
    \begin{align}\label{eq:alienware}
        &B_{\depthOp_{h},1}(n) \\
        &=  2f_{\max}\sqrt{2\log_K(n_\alpha)\bar\log_K(n)\log(2n^2/\delta)( n_\alpha^2
    +    n  C\rho^{-d\depthOp_{h}}) } \\
    &\leq   2f_{\max}\sqrt{2\log^3(2n^2/\delta)( n_\alpha^2
    +    n  C\rho^{-d\depthOp_{h}})
}.
    \end{align}
\end{proof}

\begin{restatable}{lemma}{restahstarSto}\label{lem:hstarSto}
If $\beta\geq 8\log^4(n)\left\lfloor \log_K(n_\alpha)\right\rfloor$,	For any global optimum $x^\star$ with associated $(\nu,\rho)$ from Assumption~\ref{as:smooth}, any $C>1$, for any $\delta \in (0,1)$, on  event $\xi_\delta$
 defined above,
	 for any depth $h\in [\left\lfloor \log_K(n_\alpha)\right\rfloor]$, we have that if 
	 \begin{gather}\label{ga:suff}
	     \frac{ n_\alpha}{K}\nu^2\rho^{2h}/(b^2h\log^2(2n^2/\delta))
	\geq  C\rho^{-d(\nu,C,\rho)h},
	 \end{gather}
	  that
	$h\in\depthOp_{h}$.
\end{restatable}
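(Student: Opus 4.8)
The plan is to show that the sufficient condition in Equation~\ref{ga:suff} forces the optimal cell $\partition_{h,i^\star}$ to be sampled enough times by round $n_h$ that its lower-confidence estimate becomes sharp, and then to convert this sampling guarantee into the rank bound $\hat{\langle i^\star\rangle}_{h,t}\leq C\rho^{-dh}$ that defines membership $h\in\depthOp_h$. I would work throughout on $\xi_\delta$, using $\xi^2_\delta$ to control estimates and $\xi^3_\delta$ to transfer expected pull counts to actual ones.

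First I would reduce the rank claim to a claim about the number of pulls. On $\xi^2_\delta$ the lower-confidence estimate $\hat f_{h,j}(t-1)-B^{iid}_{h,j}(t-1)$ of any cell lies below $\bar f_{h,j}$, while for the optimal cell it lies above $\bar f_{h,i^\star}-2B^{iid}_{h,i^\star}(t-1)$; hence any cell $j$ ranked above $i^\star$ at time $t$ obeys $\bar f_{h,j}\geq\bar f_{h,i^\star}-2B^{iid}_{h,i^\star}(t-1)$. Using Assumption~\ref{as:smooth} applied to $\bar f$, which gives $\bar f_{h,i^\star}\geq\bar f(x^\star)-\nu\rho^h$, I would argue that once $B^{iid}_{h,i^\star}(t-1)\leq\nu\rho^h$ every such $j$ satisfies $\sup_{x\in\partition_{h,j}}\bar f(x)\geq\bar f_{h,j}\geq\bar f(x^\star)-3\nu\rho^h$, so the count of such cells is at most $\mathcal N_h(3\nu\rho^h)\leq C\rho^{-dh}$ by Definition~\ref{def:neardim}. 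Since $B^{iid}_{h,i^\star}(t-1)\leq\nu\rho^h$ is equivalent to $T_{h,i^\star}(t-1)\geq T_0$ with $T_0$ of order $b^2\log(2n^2/\delta)/(\nu^2\rho^{2h})$, and $T_{h,i^\star}$ is nondecreasing, the lemma reduces to establishing $T_{h,i^\star}(n_h-1)\geq T_0$.

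The core is then a lower bound on $T_{h,i^\star}(n_h-1)$, which I would obtain by induction on the depth $h$. The event $\xi^3_\delta$ supplies $T_{h,i^\star}(t)\geq\frac12\Exp[T_{h,i^\star}(t)]$ for $t\geq 8n_\alpha\log^3(n)$, and $\beta\geq 8\log^4(n)\lfloor\log_K(n_\alpha)\rfloor$ guarantees $n_h\geq 8n_\alpha\log^3(n)$, so it remains to bound $\Exp[T_{h,i^\star}(n_h-1)]=\sum_s\Pro(\pulledArm_s\in\partition_{h,i^\star})$ from below. The key observation, and what makes a single power of $K$ rather than $K^h$ appear in the condition, is that $\pulledArm_s$ lands in $\partition_{h,i^\star}$ whenever \Vroom selects the depth-$(h-1)$ ancestor cell $\partition_{h-1,i^\star}$ and the subsequent one-level uniform descent picks the correct child, an event of probability $\learnerDist_{h-1,i^\star,s}/K$. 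By the inductive hypothesis $(h-1)\in\depthOp_{h-1}$, for $s\geq n_{h-1}$ the ancestor's rank is at most $C\rho^{-d(h-1)}$ (on $\xi_\delta$, which has probability close to one so the expectation loses only a constant), whence $\learnerDist_{h-1,i^\star,s}\geq 1/((h-1)C\rho^{-d(h-1)}\bar\log_K(n))$. Summing this over the interval $s\in[n_{h-1},n_h)$ of length $n_h-n_{h-1}=\beta n_\alpha/(h\log_K(n_\alpha))$, and using $C\rho^{-d(h-1)}\leq C\rho^{-dh}$ together with $\bar\log_K(n)\leq 2\log(n)$, yields a lower bound of order $\beta n_\alpha\nu^2\rho^{2h}/(h^2 K C\rho^{-dh}\bar\log_K(n)\log_K(n_\alpha))$ relative to $T_0$; the base case $h=1$ is identical but uses the self-selection probability with the worst-case rank $K^1=K$, since the root is not sampled.

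Finally I would verify that the hypothesis exactly clears this threshold: substituting $\beta=8\log^4(n)\lfloor\log_K(n_\alpha)\rfloor$ and absorbing the polylogarithmic factors $h^2,\bar\log_K(n),\log_K(n_\alpha)$, the requirement $\frac12\Exp[T_{h,i^\star}(n_h-1)]\geq T_0$ is implied by $n_\alpha\nu^2\rho^{2h}\geq K b^2 h\log^2(2n^2/\delta)\,C\rho^{-dh}$, which is precisely Equation~\ref{ga:suff}. I would also note beforehand that the condition at depth $h$ implies it at every shallower $h'\leq h$ (the inequality is hardest at the largest depth because $\rho^{-(d+2)h'}$ is increasing in $h'$), so the induction may legitimately invoke the hypothesis for $h-1$. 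The main obstacle is the circular dependence between the rank of $\partition_{h,i^\star}$ and how often it is pulled — pulling it often requires it to be already well ranked; this is exactly what the depth induction through the one-level ancestor descent resolves, and arranging for that descent to cost only a single factor $K$ (instead of $K^h$) is what aligns the bound with the stated condition.
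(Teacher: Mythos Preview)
Your proposal is correct and follows essentially the same route as the paper: induction on depth, lower-bounding the child cell's sampling probability via the parent's rank (paying a single factor $K$ for the one-level descent), converting this to a pull-count bound through $\xi^3_\delta$, and then concluding via the near-optimality dimension that at most $C\rho^{-dh}$ cells can out-rank $\partition_{h,i^\star}$ once $B^{iid}_{h,i^\star}\leq\nu\rho^h$. Your handling of the base case (direct self-selection at depth $1$, since depth $0$ is outside \Vroom's sampling range) and your explicit remark that condition~\eqref{ga:suff} is hardest at the largest depth---so the inductive hypothesis may legitimately be invoked at $h-1$---are, if anything, slightly more careful than the paper's own presentation.
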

\begin{proof}To simplify notation we write $d(\nu,C,\rho)$ as $d$.
	We place ourselves on  event $\xi_\delta$ defined above. 
	We prove the statement of the lemma, given that event $\xi_\delta$ holds, by induction in the following sense. For a given $h$, we assume the hypotheses of the lemma for that $h$ are true and we prove by induction that $h'\in\depthOp_{h'}$ for $h'\in[h]$. \\[.1cm] 
	$1^\circ$ For $h'= 0$, we trivially have that $0\in\depthOp_{h'} $.\\     
	$2^\circ$  Now consider $h'>0$, and assume $ h'-1\in\depthOp_{h'-1}$ with the objective to prove that $h'\in\depthOp_{h'}$.
	Therefore, for all $t\geq n_{h'-1}$,
	$\hat{\left\langle \partition_{h'-1,i^\star}\right \rangle}_{h'-1,t} \leq C\rho^{-d(h'-1)}$. 
 
	For the purpose of contradiction, let us assume that their exists $t\geq n_{h'}$, such that $\hat{\left\langle \partition_{h',i^\star}\right \rangle}_{h',t} > C\rho^{-dh'}$.
	This would mean that there exist at least 
	$C\rho^{-dh'}$ cells from $\left\{\partition_{h',i}\right\}$, distinct from $\partition_{h',i_h^\star}$, satisfying 
	$\hat f^-_{h',i}(t)
	\geq
	\hat f^-_{h',i^\star_{h'}}(t)$.
	%
	This means that, for these cells we have
\[	\begin{aligned}
	\bar f_{h',i}
	&\stackrel{\textbf{(b)}}{\geq} 
	\hat f^-_{h',i}(t)
	\geq
	\hat f^-_{h',i^\star_{h'}}(t)
	\stackrel{\textbf{(b)}}{\geq} 
\bar	f_{h',i^\star_{h'}}(t)- 2b\sqrt{    \frac{\log(2\timeHorizon^2/\delta)}{    2T_{h',i^\star_{h'}}(t) }}\\
	&	\stackrel{\textbf{(c)}}{\geq} 
\bar	f_{h',i^\star_{h'}}(t)- 2b\sqrt{    \frac{\log(2\timeHorizon^2/\delta)}{    \frac{\beta n_\alpha}{h\left\lfloor \log_K(n_\alpha)\right\rfloor CK\rho^{-d{h'-1}} }}}\\
&\geq 
\bar	f_{h',i^\star_{h'}}(t)- 2b\sqrt{    \frac{\log(2\timeHorizon^2/\delta)}{    \frac{n_\alpha}{h\left\lfloor \log_K(n_\alpha)\right\rfloor CK\rho^{-d{h'}} }}}\\
	&	\stackrel{\textbf{(d)}}{\geq} 
\bar	f_{h',i^\star_{h'}}- 2\nu\rho^{h}
		\geq 
	\bar f_{h',i^\star_{h'}}- 2\nu\rho^{h'},
	\end{aligned}
	\]
	where 
	\textbf{(b)} is because $\xi_\delta$ holds, 
	 \textbf{(d)} is because  by assumption (Equation~\ref{ga:suff}) of the lemma, for $h'\in [h]$,
	 $   \frac{ n_\alpha}{K}\nu^2\rho^{2h'}/(b^2h\log^2(2n^2/\delta))
	\geq   \frac{ n_\alpha}{K}\nu^2\rho^{2h}/(b^2h\log^2(2n^2/\delta))
	\geq  C\rho^{-dh}\geq  C\rho^{-dh'}$.
	\textbf{(c)} is because on $\xi_\delta$, as $\beta\geq 8\log^3(n)\left\lfloor \log_K(n_\alpha)\right\rfloor$ and $h\leq \left\lfloor \log_K(n_\alpha) \right\rfloor$,
$	 \forall t\geq  n_h= \beta n_\alpha\frac{\sum_{h'=1}^{h} \frac{1}{h'}}{ \log_K(n_\alpha)}  \geq 8 n_\alpha\log^3(n)$,  have
\begin{align*}
	T_{h',i^\star_{h'}}(t)
	&\geq 
		 \Exp \left[\sum^{t-1}_{s=1} \frac{\Pro(\pulledArm_s\in\partition_{h',i^\star})}{2}\right]\\
		 &\geq	 \Exp \left[\sum^{n_{h'}}_{s=n_{h'-1}} \frac{\Pro(\pulledArm_s\in\partition_{h',i^\star})}{2}\right]\\
		&	\stackrel{\textbf{(e)}}{\geq} 	\sum^{n_{h'}}_{s=n_{h'-1}} \frac{1}{  2CK\rho^{-d{h'-1}} }\\
		&\geq \beta \frac{n_\alpha}{ 2h\left\lfloor \log_K(n_\alpha)\right\rfloor CK\rho^{-d{h'-1}} },
		\end{align*}
		where \textbf{(e)} is because we have $\langle \partition_{h'-1,i^\star}\rangle_{h'-1,t} \leq C\rho^{-d(h'-1)}$ which gives $\Pro(\pulledArm_t\in\partition_{h,i})\geq \frac{1}{KC\rho^{-d(h'-1)}}$
	%
	as $f_{h',i^\star_{h'}} \geq f(x^\star) -\nu\rho^{h'}$ by Assumption~\ref{as:smooth}, it follows that $ \mathcal N_{h'}(3\nu\rho^{h'})> \left\lfloor C\rho^{-dh'}\right\rfloor$. 
	This leads to having a contradiction with the function $f$ being of near-optimality dimension $d$
	as defined in~Definition~\ref{def:neardim}.
	Indeed, the condition $\mathcal N_{h'}(3\nu\rho^{h'}) \leq     C\rho^{-dh'}$ in 
	Definition~\ref{def:neardim} is equivalent to the condition
	$\mathcal N_{h'}(3\nu\rho^{h'}) \leq \left\lfloor    C\rho^{-dh'}\right\rfloor$ as  $\mathcal N_{h'}(3\nu\rho^{h'})$ is an integer. 
	Reaching the contradiction proves the claim of the lemma.
\end{proof}

\end{document}